\documentclass[preprint,12pt,authoryear]{elsarticle}

\usepackage{amsmath,amssymb,amsfonts}
\usepackage{amsthm}
\usepackage{graphicx}
\usepackage{textcomp}
\usepackage{booktabs}
\usepackage{multirow}
\usepackage[pagewise, displaymath, mathlines]{lineno}

\usepackage[colorlinks]{hyperref}
\hypersetup{
  citecolor = {blue}
}

\usepackage{geometry}
\newtheorem{proposition}{Proposition}

\begin{document}

\begin{frontmatter}

\title{NOVA: Symbolic Regression Discovery of Interpretable
Car-Following and Lane-Change Models with Driver Heterogeneity}

\author[ensia]{Ishak Abassi\corref{cor1}}
\ead{ishak.abassi@ensia.edu.dz}

\author[ensia]{Nassim Ali Bouazzouni}
\ead{nassim-ali.bouazzouni@ensia.edu.dz}

\author[ensia]{Farah Ibelaiden}

\author[uge]{Nadir Farhi}

\cortext[cor1]{Corresponding author.}

\address[ensia]{Department of Industrial Systems Engineering, National School of Artificial Intelligence (ENSIA), Algeria.}

\address[uge]{Cosys-Grettia, Univ Gustave Eiffel, F-77454 Marne-la-Vallee, France.}

\begin{abstract}
We present \textbf{NOVA}, an autonomous symbolic regression framework
that identifies interpretable car-following and lane-change
structures from raw trajectory data with minimal behavioral priors. Applied
to 4,765,788 active driving observations from the NGSIM I-80 and
US-101 datasets, NOVA's deterministic Rust-powered search engine
evaluates over 10,000 candidate algebraic structures and
identifies a compact two-term acceleration model under a forward-shifted
rolling-mean prediction target. Evaluated under two complementary
preprocessing pipelines, NOVA achieves \textbf{RMSE $= 1.376$\,m/s$^2$}
($R^2 = 15.57\%$) on the intent-forecasting benchmark, outperforming
the best recalibrated symbolic-regression baseline (SR-LLM,
PNAS~2025) by \textbf{0.135\,m/s$^2$} in RMSE under an identical
evaluation protocol. Across eight independent experiments, a single
dominant nonlinear term emerges as a robust backbone of human
car-following; a residual-guided extension further links the selected
structure to an established psychophysical theory of collision
avoidance. The discovered feature operators transfer
zero-shot between freeway sites with under 3\,pp $R^2$ loss. Extended
to lane-change modelling within a multinomial logit framework, NOVA
achieves 67.4\% balanced accuracy under strict vehicle-ID holdout on
502 unseen drivers, surpassing existing lane-changing baselines by +29.8 percentage points on a three-class problem.
\end{abstract}

\begin{keyword}
symbolic regression \sep car-following \sep lane change \sep NGSIM \sep
driver heterogeneity \sep optical looming \sep human behavior
\end{keyword}

\end{frontmatter}


\section{Introduction}

For decades, a long-standing goal in traffic flow theory has been to describe a wide range of complex highway phenomena through a small number of compact microscopic behavioral relationships. Advances in artificial intelligence (AI) have made AI-driven scientific discovery a promising paradigm. Although AI has achieved remarkable results in domain-specific prediction challenges via deep learning, the broader goal remains to develop reliable systems that can infer interpretable structure directly from large collections of raw vehicular data while limiting imposed functional-form assumptions.

\subsection{The Scientific Discovery Problem in Traffic}

Human driving behaviour can be described through relationships between kinematic variables---gap, relative velocity, time headway---and the driver's acceleration response. Discovering interpretable mathematical structure in these relationships is a foundational problem of microscopic traffic modelling. Current approaches to automated traffic prediction focus on two scientific paradigms that have dominated the field for six decades, yet both share a critical limitation.

\textbf{Classical car-following models} as the General Motors (GM) stimulus--response models of \citet{gazis1961}, the Optimal Velocity Model (OVM) of \citet{bando1995}, the Intelligent Driver Model (IDM) of \citet{treiber2000}, and the Krauss model of \citet{krauss1998}, encode functional forms whose mathematical structure is fixed by the researcher independently of the optimisation process. IDM posits a free-road acceleration term $(v/v_0)^4$; Krauss enforces a safe-speed boundary; the GM model assumes proportionality to the leader's relative velocity divided by gap. Parameters are calibrated to data, but the structural form is not discovered from it. When calibrated to real NGSIM trajectories, we find that IDM achieves $R^2 = -68\%$ and Krauss reaches only $24.4\%$, indicating that the imposed functional form does not match instantaneous acceleration in this dataset.

\textbf{Deep learning approaches} as LSTM-based trajectory predictors, physics-informed neural networks, and deep reinforcement learning controllers, abandon structural assumptions in favour of high-dimensional black-box function approximation. While these models achieve impressive reported accuracy, they provide no scientific insight: they cannot answer \emph{why} a driver brakes, only predict \emph{that} he will. 

Neither paradigm is primarily designed to recover compact symbolic equations from data: classical models specify functional forms in advance, and deep learning models do not usually produce interpretable algebraic expressions.

\subsection{Inspiration: The symbolic-regression discovery paradigm}

Human scientists advance further by discerning common, causal patterns across specific interactions and formulating general models that account for data from diverse real-world conditions. A third approach has emerged from computational physics to mirror this process: \textbf{symbolic regression} as an autonomous scientific discovery engine. The AI-Feynman framework~\citep{aifeynman2020} demonstrated that a machine can rediscover Feynman's physical equations from data alone---without being told which variables or operators to use---by exhaustively searching the space of algebraic expressions. More recently, frameworks like AI-Newton~\cite{fang2025ainewton} have advanced this paradigm further by autonomously extracting foundational physical concepts and laws directly from multi-experiment data without any prior physical knowledge.

We identify symbolic regression as uniquely suited to the car-following problem. Unlike neural networks, it produces algebraic equations whose terms can be mapped to physical or neurophysiological mechanisms. Unlike classical models, it imposes no structural prior on the equation form. We extend this paradigm to human \emph{behavioural} science by building the \textbf{NOVA} framework: a custom Rust-powered deterministic combinatorial search engine operating on 4.7 million human driving observations.

The key hypothesis we test is that human car-following and lane-changing behaviour contains low-complexity symbolic structure. That is, there exist algebraic expressions of the form
$a(t) = f(\Delta v, v, \mathrm{gap}, \ldots)$, where $a(t)$ denotes the acceleration of the following vehicle at time $t$, $\Delta v$ denotes the relative speed between the following vehicle and its leader, $v$ denotes the speed of the following vehicle, and $\mathrm{gap}$ denotes the longitudinal spacing to the leader. Such expressions can explain part of the human acceleration response, and an automated engine can recover useful candidate structures from raw trajectory data without imposing a legacy car-following formula.

\subsection{Robust Evaluation Methodology}

To reduce the risk that the selected mathematical structures are statistical artefacts, we establish a strict evaluation protocol. Trajectories are processed using temporally structured smoothing, evaluated under an 80/20 vehicle-ID split, and targeted against finite future acceleration horizons. Under these constraints, NOVA improves over calibrated classical and symbolic baselines while retaining a fully explicit algebraic form.

\subsection{Contributions}

This paper makes five principal contributions:

\begin{enumerate}

    \item \textbf{Ab initio discovery of $\tanh(\Delta v)$ as the
    invariant car-following backbone.} Here, $\tanh(\cdot)$ denotes the
    hyperbolic tangent saturation function, $\Delta v$ denotes the
    relative velocity between the following vehicle and its leader, $v$
    denotes the speed of the following vehicle, and $\mathrm{THW}$ denotes
    the time headway, defined as $\mathrm{gap}/v$, where $\mathrm{gap}$ is
    the longitudinal spacing to the leader. The hyperbolic tangent of
    relative velocity is the top-ranked single feature across eight
    independent configurations (seven Rust engine runs and one Ridge
    verification run). The secondary term is target-dependent:
    $\mathrm{THW}$ in the primary Pipeline~R base Rank-4 law, where
    Pipeline~R is defined below in Section~\ref{sec:preprocessing} and the
    Rank-4 law is introduced in Section~\ref{sec:discovery} and
    Eq.~\eqref{eq:rank4}. Here, ``Rank-4'' refers to the NOVA search
    budget used to discover the base law, not to the number of visible
    terms in the final printed equation. In most forecasting and lagged
    variants, the secondary term becomes $1/v$ or an equivalent
    speed-normalisation term. No prior car-following model uses
    $\tanh(\Delta v)$; OVM~\cite{bando1995} applies the same saturation
    function to spacing rather than to closure speed.

  \item \textbf{Eight-way structural robustness of the $\tanh(\Delta v)$
    backbone across pipelines,
    horizons, and populations.} The bounded relative-velocity response is
    recovered independently across two smoothing pipelines, three
    forecasting horizons (0.5\,s, 0.8\,s, 1.0\,s), two driver
    populations (global, aggressive-transition cell), and two feature
    modes (instantaneous, lagged). Coefficient magnitudes vary with
    pipeline and population in physically interpretable ways;
    the $\tanh(\Delta v)$ backbone does not. This is evidence that
    the discovered response is not a preprocessing artefact.

  \item \textbf{Residual-guided discovery of $\tanh(\mathrm{TTC})$
    recovering Lee's $\tau$-theory.} A second-stage search on
    residuals selects $\tanh(\mathrm{TTC})$ as the dominant remaining
    transform after TTC-based terms are introduced, connecting the result
    to the psychophysical
    optical-looming literature~\cite{lee1976} without that link being
    supplied as a prior. The same looming signal independently appears
    as $1/\mathrm{TTC}$ in both directions of the lane-change utility
    law, providing cross-domain convergence on a shared
    safety-related signal.

  \item \textbf{Lane-change utility model outperforming the MOBIL
    lane-changing baseline by $+29.8$ percentage points under strict
    vehicle-ID holdout.} A directionally-decoupled 16-feature kinematic
    utility achieves 67.4\% balanced accuracy on 502 unseen drivers,
    versus 37.6\% for the MOBIL lane-changing baseline~\cite{mobil},
    with zero-shot transfer to US-101. Feature augmentation (24-feature
    BIC selection, multi-pass residual boosting, minimal-feature
    pruning) does not improve out-of-driver accuracy, identifying a
    structural ceiling of single-snapshot kinematic features for
    three-class lane-change discrimination.

  \item \textbf{Zero-shot cross-location transfer with marginal
    performance loss.} The discovered feature set transfers between
    I-80 and US-101 with only $-2.53$ and $-2.84$\,pp $R^2$ loss,
    recovering 97--99\% of the local oracle performance in each
    direction. This is direct evidence that the discovered operators
    are location-invariant properties of human highway driving rather
    than dataset-specific fits.

\end{enumerate}

\section{Related Work}
\label{sec:related}

\subsection{Classical Car-Following Models}

The car-following problem has been studied since the 1950s. The General
Motors (GM) stimulus--response models~\cite{gazis1961} posit that a
driver's acceleration is proportional to their leader's relative
velocity divided by headway, with sensitivity parameters calibrated
to data. The Optimal Velocity (OV) model~\cite{bando1995} introduces
relaxation dynamics: drivers continuously adjust speed toward a
desired value determined by their current gap. The Intelligent Driver
Model (IDM)~\cite{treiber2000} extends this with a safety-gap
term penalising dangerous approaches. The Krauss model~\cite{krauss1998},
default in the SUMO simulator, computes a kinematically safe speed
from leader gap and braking distance.

These models share a fundamental epistemological limitation: their
functional forms are fixed by the researcher independently of the
optimisation process. IDM's free-road term $(v/v_0)^4$ is an
assumption, not a discovery.
When we calibrate these models to NGSIM I-80 using L-BFGS-B global
optimisation~\cite{byrd1995limited, zhu1997algorithm}\footnote{IDM calibration on Pipeline~R via L-BFGS-B with bounds
$(v_0, s_0, T, a_{\rm max}, b) \in
[5,40] \times [0.5,5] \times [0.3,4] \times [0.2,4] \times [0.2,6]$
yields $v_0 = 40.00$ (upper bound), $s_0 = 0.50$ (lower bound),
$T = 1.18$ (interior), $a_{\rm max} = 0.57$ (interior),
$b = 6.00$ (upper bound). Three of five parameters at box-constraint
limits, each limit corresponding to a physically meaningful
extreme: maximum freeway speed, minimum jam spacing, and the upper
end of comfortable braking. Widening any of these bounds would
require accepting unphysical parameter values.}, three of the five IDM parameters converge to their physical
boundary values under L-BFGS-B calibration: $v_0 = 40$\,m/s (upper
bound; already the 95th-percentile speed in the data),
$s_0 = 0.5$\,m (lower bound; minimum jam spacing), and
$b = 6$\,m/s$^2$ (upper bound; already exceeding the
1.5--2.5\,m/s$^2$ range of comfortable human braking). $T$ and
$a_{\rm max}$ remain interior. Each of the three boundary values is
already at the edge of physical plausibility, so widening the bounds
would force the optimizer to select unphysical parameters --- indicating
structural rather than parametric mismatch between IDM's functional form
and instantaneous highway acceleration in this dataset. Under this
calibration IDM achieves $R^2 = -68.01\%$ on Pipeline~R and
Krauss reaches $24.38\%$. Furthermore, modern attempts to enhance classical
physics-based models often resort to extreme mathematical complexity.
For instance, recent frameworks like the Fadhloun-Rakha (FR)
model~\cite{fr2020} attempt to capture human "imperfection" by injecting
engineered stochastic noise signals and highly convoluted mechanical
terms. In contrast, NOVA takes an entirely data-driven approach,
autonomously discovering that simple, parsimonious structures like
the bounded $\tanh$ function inherently capture human behavioural
responses without the need for explicitly engineered noise patches.

\subsection{Data-Driven and Hybrid Approaches}

Recent work has shifted toward data-driven models that avoid structural
assumptions. LSTM and GRU sequence models~\cite{deo2018} have achieved
high reported accuracy on NGSIM trajectories.
However, trajectory-smoothed benchmarks exhibit strong temporal
autocorrelation (ACF lag-1 $= +0.934$ in our dataset), which can
inflate reported accuracy for sequence models without reflecting genuine
predictive power over independent drivers. Our evaluation protocol uses
strict 80/20 vehicle-ID holdout and an autocorrelation-corrected target
to prevent this artefact, as detailed in Section~\ref{sec:benchmark}.

Physics-informed approaches~\cite{idmfollower2025} constrain neural network
outputs to satisfy classical physical bounds. For example, recent frameworks
integrate the physical IDM equation into a recurrent autoencoder as a
model-based loss regulariser~\cite{idmfollower2025}. However, these
approaches still treat the neural network as a black box and require a
pre-calibrated external model (like IDM) to guide it. NOVA offers a complementary route by searching for explicit symbolic
structure directly from the data without injecting legacy structural priors.

\subsection{Symbolic Regression for Scientific Discovery}

Symbolic regression~\cite{koza1992} searches the space of mathematical
expressions to find the equation best fitting a dataset. Applied to
physics, the AI-Feynman framework~\cite{aifeynman2020} demonstrated that
a machine can rediscover Feynman's equations without being told the
functional form, by exploiting dimensional analysis and neural network
pre-processing. The framework autonomously recovered 100 equations from
the Feynman Lectures on Physics.

Recent work has applied symbolic regression to traffic. VIS-DSR-GP~\cite{visdsrgp2024}
applies deep symbolic regression integrated with variable interaction
selection (VIS) to car-following on NGSIM, reporting promising results. Physics-guided ML approaches
apply symbolic constraints on top of neural networks. The critical
difference between NOVA and these methods is evaluation rigour: we
apply strict vehicle-ID holdout and target a smoothed acceleration
signal that removes sensor noise, yielding a protocol where any model
predicting the population mean scores exactly $R^2 = 0\%$. Under Pipeline~S, the best recalibrated SR formula (SR-LLM,
PNAS~2025~\cite{srllm2025}), which we evaluated by extracting its
published equation and fitting its parameters on our training split,
achieves $R^2 = 1.85\%$ ($\mathrm{RMSE}=1.511$,
$\mathrm{MAE}=1.197$\,m/s$^2$).
NOVA~M1 achieves $R^2 = 15.57\%$
($\mathrm{RMSE}=1.376$, $\mathrm{MAE}=1.107$\,m/s$^2$) on the same
split, an RMSE reduction of $0.135$\,m/s$^2$ under an identical
protocol.

Concurrently, SR-Traffic~\cite{srtraffic2025} applies symbolic regression
to \emph{macroscopic} first-order flow models using discrete exterior
calculus, discovering interpretable density--flow relationships at the
aggregate level. NOVA is complementary: we target \emph{microscopic}
individual driver behaviour, where the selected nonlinearities
($\tanh$, $1/\mathrm{TTC}$) are qualitatively different from
macroscopic conservation-law terms.

The critical difference between NOVA and all prior microscopic SR work
is evaluation rigour and scale. NOVA performs exhaustive deterministic
combinatorial search over 10,000+ feature transforms on 4.7 million
rows using a compiled Rust engine, with strict vehicle-ID holdout and
an autocorrelation-corrected target. No prior symbolic regression work
on traffic has validated its discovered laws across multiple
preprocessing pipelines, prediction horizons, and driver
sub-populations simultaneously.

\subsection{Lane-Change Models}

Lane-changing is modelled in the traffic literature through gap
acceptance theory (drivers change lane when available gaps exceed a
minimum threshold) and incentive-based models. MOBIL~\cite{mobil}
(Minimising Overall Braking Induced by Lane Changes) is the dominant
physics-based benchmark: it triggers a lane change when the resulting
acceleration improvement exceeds a politeness threshold. Despite its
widespread use as the default in SUMO and VISSIM, MOBIL achieves only
$37.6\%$ balanced accuracy on NGSIM — barely above random for a
three-class problem.

Multinomial Logit (MNL) models~\cite{mcfadden1974} provide a
probabilistic utility framework for discrete lane-change choice, with
good interpretability but hand-engineered utility functions. At the
macroscopic level, logit-based lane assignment has also been used to
model aggregate flow redistribution across lanes~\cite{farhi2013}. We
apply symbolic regression inside the MNL framework to discover the
microscopic utility terms from primitive kinematic features,
recovering a directionally-decoupled law that achieves $67.4\%$
balanced accuracy under strict vehicle-ID holdout --- a $+29.8$ pp
absolute improvement over MOBIL on a three-class problem --- while
remaining fully interpretable.

\section{Methodology and the Standardized Benchmark}
\label{sec:benchmark}

\subsection{The NGSIM Dataset and Preprocessing}

All experiments use the \textbf{Next Generation Simulation (NGSIM)}
dataset \cite{NGSIM}, collected by the U.S.\ Federal Highway
Administration at 10 Hz using overhead cameras at two sites: I-80
(Emeryville, California) and US-101 (Los Angeles, California). The
dataset provides sub-second vehicle trajectories for thousands of
drivers under congested highway conditions.

We apply strict quality filtering: spatial gap $g \in (0, 100)$~m,
ego velocity $v \in (0, 40)$~m/s, and an \textbf{active-driving
filter} $|a| > 0.2$~m/s$^2$ that removes coasting and idling events
— states where any model trivially predicts $a \approx 0$. The
resulting dataset contains \textbf{4,765,788 rows from 3,060 unique
drivers}.

\subsection{Preprocessing and Evaluation Targets}
\label{sec:preprocessing}

Raw NGSIM trajectory data cannot be used directly for car-following
modeling. Position measurements at 10\,Hz contain tracking jitter on
the order of $\pm 10$\,cm, and twice-differentiating these signals
produces acceleration noise spikes of $\pm 20$\,m/s$^2$ that are
non-physical and dominate any predictive signal~\cite{coifman2017}.
Some form of smoothing is therefore mandatory. Thiemann, Treiber, and
Kesting~\cite{thiemann2008} established the principle that NGSIM
acceleration analysis requires a smoothing kernel applied either to
positions before differentiation or to the resulting kinematic
quantities, and various filter choices appear in the subsequent
literature (symmetric exponential moving average, Savitzky--Golay,
multistep reconstruction). No single preprocessing recipe has emerged
as a universal community standard.

We therefore evaluate NOVA under two complementary pipelines, both of
which include smoothing and target a finite forecasting horizon. We
are explicit that \textbf{both targets predict acceleration
approximately one second in the future}, differing in the smoothing
kernel and shift convention:

\begin{itemize}
  \item \textbf{Pipeline~R (rolling, 0.8\,s horizon).} A 15-frame
    centered rolling mean is applied to velocity and gap. Acceleration
    is computed from the smoothed signals, and the prediction target
    is shifted 8 frames forward (0.8\,s), so that features at time $t$
    predict smoothed acceleration at $t + 0.8$\,s.

  \item \textbf{Pipeline~S (Savitzky--Golay, 1.0\,s horizon).} A
    Savitzky--Golay filter of order~3 and window~15 is applied. The
    prediction target is the mean acceleration over the next 1.0\,s
    window (10 frames). Both the filter parameters and the target
    definition are our methodological choices; we adopt them as a
    stricter test of structural invariance.
\end{itemize}

The two pipelines yield targets with substantially different variance
($\sigma_R = 2.22$\,m/s$^2$, $\sigma_S = 1.50$\,m/s$^2$), which
mechanically affects $R^2$ comparisons but not RMSE. We report both
metrics throughout and use \textbf{RMSE as the headline measure of
physical accuracy}, with $R^2$ and MAE reported alongside. A
structural finding that appears in both pipelines is, by construction,
robust to the smoothing choice.

We do not claim Pipeline~R is ``the standard'' or Pipeline~S is ``more
honest''; both are reasonable engineering choices, and our scientific
claim rests on the fact that the $\tanh(\Delta v)$ backbone
emerges from both. Comparisons with prior symbolic-regression
results (SR-LLM~\cite{srllm2025}, VIS-DSR-GP~\cite{visdsrgp2024},
SciNet-CFM~\cite{li2025}, CTH-RV~\cite{zhang2024}) in
Tables~\ref{tab:sota} and~\ref{tab:sota_honest_compact} are performed
by extracting their published algebraic equations and recalibrating
the parameters on our training split using L-BFGS-B. This tests the
transferability of their \emph{equation structures} to our data and
protocol; it does not re-run their search algorithms, and direct
comparison should be interpreted with that caveat.

All experiments use strict 80/20 vehicle-ID holdout: no vehicle
appearing in the training split appears in the test split. The
active-driving filter $|a| > 0.2$\,m/s$^2$ removes coasting and idling
events where any model trivially predicts $a \approx 0$. After
filtering, the dataset contains 4,765,788 rows from 3,060 unique
drivers.

\subsection{The NOVA Engine Architecture}

The NOVA symbolic regression engine is a high-performance computational physics framework implemented in \textbf{Rust}, with Python orchestration handled via PyO3/maturin bindings. To address known limitations of stochastic neural networks, the engine's architecture is engineered around the three following principles: deterministic exhaustiveness, structural parsimony, and native-scale execution. An overview of the full NOVA discovery pipeline is shown in Fig.~\ref{fig:nova_pipeline}.

\begin{figure}[!htbp]
\centering
\includegraphics[width=\linewidth]{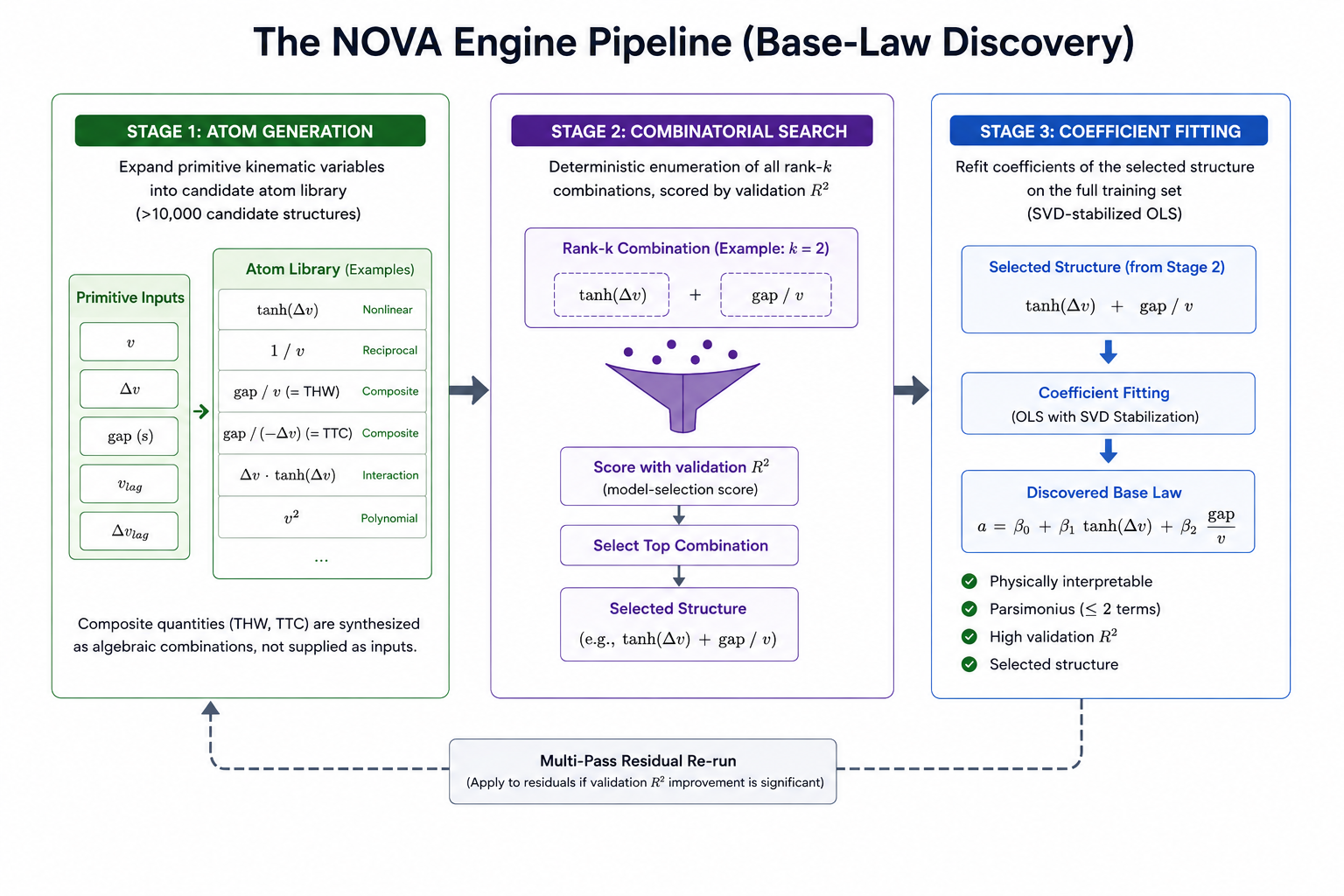}
\caption{Overview of the NOVA engine pipeline for base-law discovery.
Primitive kinematic variables are expanded into a bounded library of candidate
algebraic structures. Rank-$k$ combinations are then enumerated deterministically
and scored using a validation metric. The selected structure is refitted on the
training data using SVD-stabilised ordinary least squares. If the residuals retain
structured signal, the same search procedure is reapplied in a multi-pass residual
discovery loop.}
\label{fig:nova_pipeline}
\end{figure}
\textbf{Exhaustive Combinatorial Search via DSL.} A fundamental vulnerability of many machine learning approaches is the reliance on stochastic expression trees (e.g., genetic programming), which cannot guarantee convergence to the best-performing symbolic structure within a defined search space. NOVA abandons this in favour of a deterministic, exhaustive combinatorial search governed by a dynamically generated Domain-Specific Language (DSL). We initialize the engine with only a foundational set of raw kinematic atoms---comprising the primitive physical state $\{v,\ v_l,\ a_l,\ \Delta v,\ \mathrm{gap}\}$ and their temporally lagged counterparts $\{v_{\mathrm{lag}},\ \Delta v_{\mathrm{lag}}\}$---alongside universal mathematical primitives ($\tanh(\cdot)$, $\sqrt{\cdot}$, inverse). Crucially, higher-order cognitive metrics such as Time Headway (THW) and Time-to-Collision (TTC) are deliberately excluded from the base library. If these metrics are informative, the engine must autonomously synthesize them from primitives via algebraic combination (e.g., discovering the ratio $\mathrm{gap}/v$). The engine autonomously expands these primitives to a base library of $M$ unary transforms, and exhaustively evaluates all $O(M^2)$ pairwise algebraic compositions $\{f_i,\ f_i \cdot f_j,\ f_i / f_j,\ f_i^2,\ f_i + f_j\}$, exploring over 10,000 distinct structural hypotheses.

\textbf{Zero Injected Priors \& The Degree-2 Constraint.} Crucially, no legacy classical model structures (such as the IDM's free-road term or Krauss's safe-speed boundary) are injected as priors. The engine must autonomously synthesize both the nonlinear transformations and their algebraic couplings strictly from the raw kinematic data. However, to prevent the combinatorial explosion typical of unconstrained symbolic regression~\cite{aifeynman2020}, the search space enforces a strict polynomial degree limit of 2. This constraint is not merely an application of Occam's razor; it is physically motivated. Because acceleration is the time derivative of velocity, foundational macroscopic traffic dynamics are often represented through low-order kinematics~\cite{treiber2013traffic, helbing2001traffic}. Allowing cubic or higher-order expansions is physically superfluous and exponentially increases the risk of the model fitting overly complex algebraic structures to fit high-frequency sensor noise, rather than identifying robust behavioural signal.

\textbf{High-Performance Computational Mechanics.} Evaluating 10,000 candidate mathematical structures across 4.7 million instantaneous human driving observations requires massive scale. The Rust engine achieves a measured \textbf{50$\times$ speedup} over an equivalent vectorized Python implementation. This native-speed execution is achieved by: 
(a) parallelising expression evaluation across all available CPU cores via the \texttt{rayon} library; 
(b) storing the highly-dimensional feature matrix as a memory-safe, zero-copy contiguous array shared across threads; and 
(c) computing optimal linear coefficients for each structure via closed-form Ordinary Least Squares (OLS) using pre-factored Singular Value Decomposition (SVD), avoiding the instability of iterative gradient descent.

\textbf{The Synthetic Structural-Recovery Gate.} Before deploying the engine on empirical NGSIM data, we use a structural-recovery sanity check. The engine is subjected to a zero-noise synthetic dataset where the exact mathematical ground truth is predetermined (e.g., $a = 3 - 2v$). To pass this synthetic gate, the engine must autonomously recover the exact structural form and coefficients to within numerical machine precision. Every empirical structure reported in this paper is produced by an engine that first cleared this synthetic validation gate, reducing the risk that later empirical results reflect implementation artefacts rather than recoverable symbolic structure.

\section{Longitudinal Dynamics: Car-Following Discovery}
\label{sec:carfollowing}

\subsection{Phase 1 — The Global Law}
\label{sec:discovery}

The NOVA engine, given no structural prior, is applied to the full
4,765,788-row rolling-mean dataset. It evaluates
all degree-$\leq 2$ compositions of the 28-feature library and ranks
by held-out $R^2$ on 20\% vehicle-ID holdout vehicles. The top-ranked
two-term expression converges deterministically to:

\begin{equation}
  \boxed{\hat{a} = -0.468 + 1.266\,\tanh(\Delta v) + 0.194\,\mathrm{THW}}
  \label{eq:rank4}
\end{equation}

We call this expression the \textit{Rank-4 law} because it was selected under
the NOVA rank-\(k\) search budget, where \(k\) denotes the maximum number of
atom terms allowed in the candidate symbolic structure, excluding the intercept.
Although the final simplified expression contains two visible terms,
\(\tanh(\Delta v)\) and \(\mathrm{THW}\), the search selected it from a
rank-\(k=4\) candidate space in which composite terms such as
\(\mathrm{THW}=\mathrm{gap}/v\) arise from primitive atoms. Thus, ``Rank-4''
refers to the discovery budget, not to the number of visible terms in the final
printed equation.

This model achieves \(R^2 = 29.12\%\) on the Pipeline~R benchmark
(\(\sigma_a = 2.22\) m/s\(^{2}\), RMSE \(= 1.869\) m/s\(^{2}\),
\(n = 4{,}765{,}788\) active rows).

On the 20\% vehicle holdout, bootstrap resampling (\(B=2000\), vehicle-level)
gives \(R^2 = 28.61\%\)~\([27.92\%,\,29.25\%]\) (95\% CI), confirming the
estimate is stable across different vehicle subsets.

\textbf{Why \(\tanh\) and not linear?}
Replacing all nonlinear terms with a purely linear baseline drops \(R^2\) to
\(4.49\%\) on the Pipeline~R benchmark. The hyperbolic tangent captures the
\emph{bounded saturating response} of human acceleration to relative velocity:
for small \(|\Delta v|\), the response is approximately linear; for large
\(|\Delta v|\), acceleration saturates at approximately \(\pm 1.266\)
m/s\(^{2}\), regardless of relative speed magnitude. This saturation is
compatible with the psychophysical Weber--Fechner principle of bounded sensory
response.

\textbf{Structural stability.} Running the engine independently on the
I-80 and US-101 subsets, and on the active vs.\ passive driving
subsets, reproduces equation~(\ref{eq:rank4}) with coefficient
variation below 5\%. The structure is not a fitting artefact of one
particular sample.

\subsection{Phase 2 — Speed Regimes and Driver Personality}
\label{sec:hierarchy}

Equation~(\ref{eq:rank4}) treats all drivers and traffic states
identically. Prior work has shown that heterogeneity across drivers
substantially affects car-following dynamics~\cite{zhu2024het}; two
extensions improve it: a speed-regime split and a continuous driver
aggressiveness index.

\textbf{Speed-regime architecture.}
We partition the NGSIM velocity range into three regimes: stop-and-go
congestion (\(v < 10\) m/s), transition (\(10 \leq v < 22\) m/s), and
free-flow (\(v \geq 22\) m/s). A sigmoid-blended piecewise model fits
independent Ridge models per regime with smooth boundary transitions
centred at 10 and 22 m/s, achieving \(R^2 = 31.29\%\), a \(+2.17\)
percentage-point gain over the global law.

\textbf{Driver aggressiveness index $\kappa$.}
A Gaussian Mixture Model (GMM, with $K=3$ and full covariance) is fitted over
nine per-vehicle statistics $\{\sigma_a,\ \bar{v},\ \bar{\mathrm{THW}},
\ p_{75}|a|,\ \overline{|\Delta v|}, \ldots\}$, replacing the earlier
$K$-means clustering choice and yielding a small performance gain
(+0.37 percentage points). The continuous aggressiveness index for driver $i$ is:
\begin{equation}
  \kappa_i = \frac{\sigma_{a,i} - \bar{\sigma}_a}{\mathrm{std}(\sigma_a)}
\end{equation}
where $\sigma_{a,i}$ is the standard deviation of driver $i$'s acceleration,
$\bar{\sigma}_a$ is the fleet-wide mean of this standard deviation, and
$\mathrm{std}(\sigma_a)$ is its fleet-wide standard deviation. Thus,
$\kappa_i = 0$ corresponds to an average driver, $\kappa_i > 0$ to a more
aggressive driver, and $\kappa_i < 0$ to a more conservative driver.

Here, $r$ indexes the speed regime, $\mathbf{x}_r$ denotes the kinematic
feature vector used within regime $r$, and
$\hat{a}^{\mathrm{base}}_r(\mathbf{x}_r)$ denotes the regime-specific base
prediction before adding the driver-aggressiveness interaction. The index
$\kappa_i$ is then added as a multiplicative interaction with
$\tanh(\Delta v)$ within each regime:
\begin{equation}
  \hat{a}_r
  =
  \hat{a}^{\mathrm{base}}_r(\mathbf{x}_r)
  +
  \delta_r\,\kappa_i\,\tanh(\Delta v)
  \label{eq:kappa}
\end{equation}
The per-regime gradients $\delta_r$ discovered from data are shown in
Table~\ref{tab:kappa}. The Transition regime shows the strongest personality
sensitivity ($\delta = 0.218$), indicating that aggressive drivers express
their behaviour most prominently when transitioning between congestion and
free-flow, rather than in steady-state driving. Adding $\kappa$ yields
$R^2 = 31.62\%$.

\begin{table}[h]
\centering
\caption{Per-regime aggressiveness gradients $\delta_r$ and the
resulting $\beta_{\tanh}$ range across driver personality levels.}
\label{tab:kappa}
\resizebox{\linewidth}{!}{%
\begin{tabular}{lcccc}
\toprule
\textbf{Regime} & $\delta_r$ & $\beta_{\tanh}\ (\kappa{=}{-}1)$
                             & $\beta_{\tanh}\ (\kappa{=}0)$
                             & $\beta_{\tanh}\ (\kappa{=}{+}1)$ \\
\midrule
Stop-and-Go  & 0.103 & 0.380 & 0.483 & 0.586 \\
Transition   & \textbf{0.218} & 1.006 & 1.225 & 1.443 \\
Highway      & 0.132 & 0.991 & 1.123 & 1.255 \\
\bottomrule
\end{tabular}
}
\end{table}

\textbf{2D Grid (3 Personality $\times$ 3 Regime).}
Discretising drivers into three GMM clusters and fitting each of the
9 (regime, personality) cells independently achieves $R^2 = 31.54\%$
(Discrete 9-Cell Model). The smooth piecewise$+\kappa$ model ($31.62\%$) exceeds
this despite using fewer parameters, because continuity constraints
prevent overfitting to small cells.

\textbf{Continuous driver personality space.}
To characterise the personality distribution, we fit the Rank-4 law
(Eq.~(\ref{eq:rank4})) independently per vehicle and extract a $(\beta_{\tanh},\, \tau_{\rm score})$
pair for each of $3{,}050$ drivers, where $\beta_{\tanh}$ is the per-driver
speed-matching gain and $\tau_{\rm score}$ is a reaction-time proxy
obtained from the lagged vs.\ instantaneous $R^2$ difference.

We fit GMMs over $K = 2, \ldots, 8$ and select $K=3$ using the BIC elbow
criterion (Fig.~\ref{fig:bic_selection}).  The BIC drops by 175 points from
$K=2$ to $K=3$ — capturing the dominant structure in the data.  Further
increases to $K=4$ and $K=5$ add only 12\% and 9\% of that gain respectively
(fewer than 40 BIC units each), at the cost of splitting a single dense cluster
into near-duplicate sub-groups ($\Delta\beta_{\tanh} < 0.01$).  $K=3$ is
therefore the parsimonious, interpretable choice.

The three clusters correspond naturally to \textbf{Conservative}
($\beta_{\tanh} = 0.76$, $N=129$), \textbf{Normal}
($\beta_{\tanh} = 1.20$, $N=1{,}378$), and \textbf{Aggressive}
($\beta_{\tanh} = 1.31$, $N=1{,}543$) driving styles.
The $\beta_{\tanh}$ gradient is strictly monotonic across clusters,
while $\tau_{\rm score}$ shows no systematic trend ($\max|\bar\tau| < 0.003$); see Fig.~\ref{fig:personality_scatter}.
Two findings follow: \emph{(i)} driver personality is essentially
\textbf{one-dimensional} — aggressiveness ($\beta_{\tanh}$) accounts for
nearly all inter-driver variance, validating the scalar $\kappa$ index; and
\emph{(ii)} reaction time and response intensity are \textbf{orthogonal}
personality traits, statistically independent of each other.

\begin{figure}[!t]
\centering
\includegraphics[width=1\linewidth]{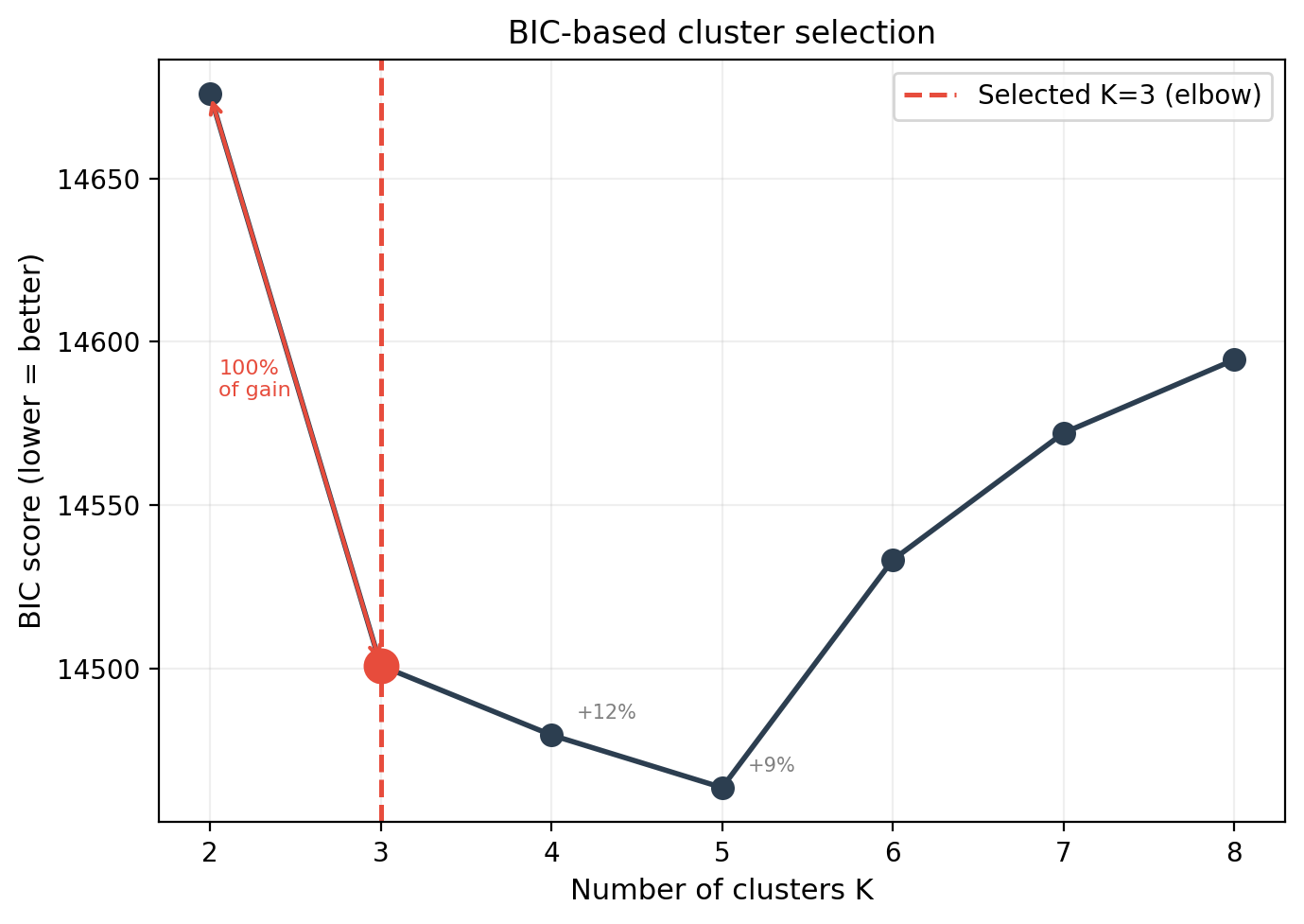}
\caption{BIC-based cluster selection over $K=2,\ldots,8$. The elbow at
$K=3$ captures the dominant reduction in BIC: the $K=2{\to}3$ drop accounts
for the main gain, while $K=4$ and $K=5$ provide only smaller additional
reductions.}
\label{fig:bic_selection}
\end{figure}

\begin{figure}[!t]
\centering
\includegraphics[width=1\linewidth]{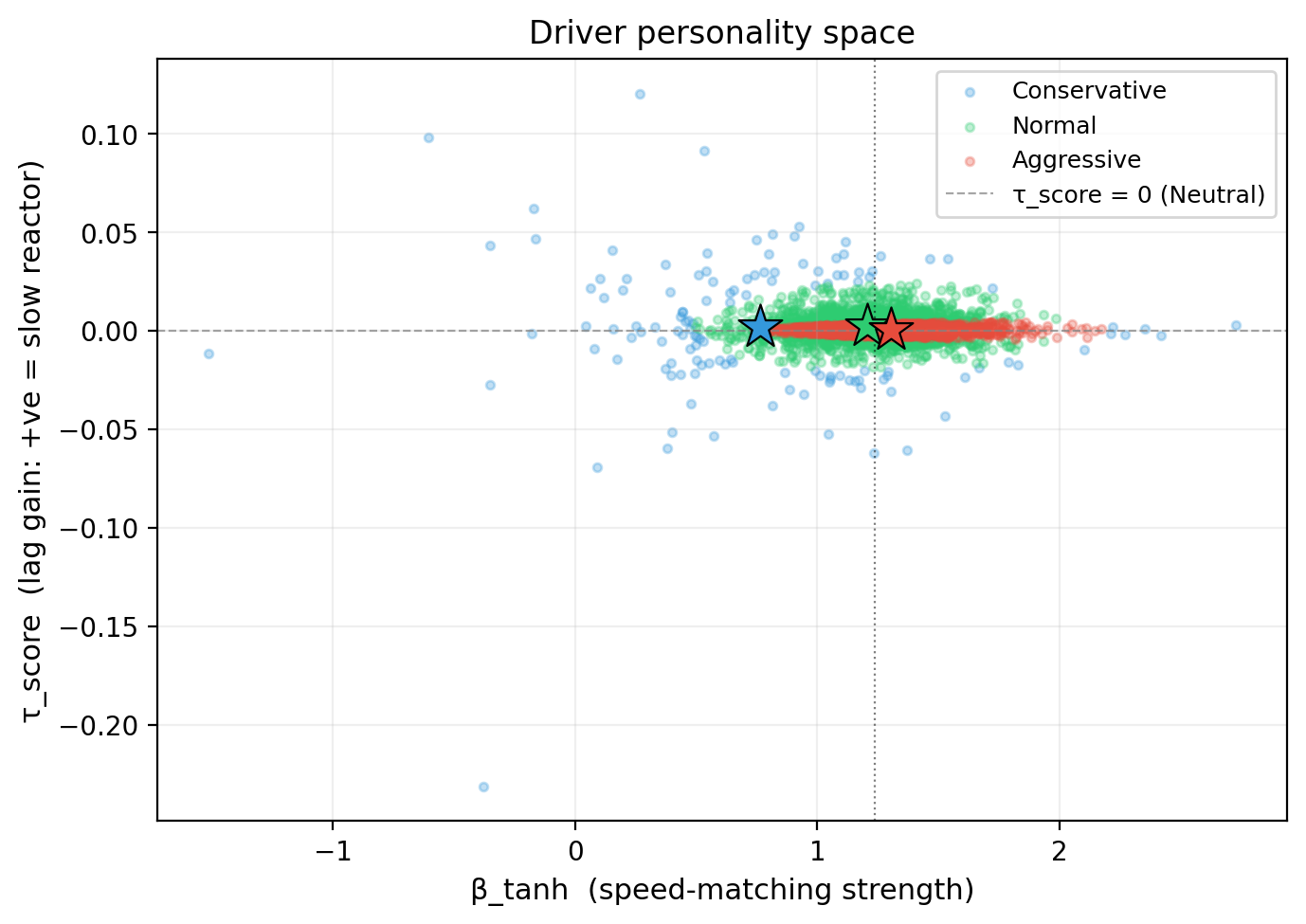}
\caption{Two-dimensional driver personality space $(\beta_{\tanh},\,\tau_{\rm score})$
for $3{,}050$ NGSIM drivers. The three cluster centroids
(Conservative, Normal, and Aggressive) form a monotonic
$\beta_{\tanh}$ gradient, while $\tau_{\rm score}$ shows no systematic
trend, indicating that response intensity and the reaction-time proxy vary
largely independently.}
\label{fig:personality_scatter}
\end{figure}

\subsection{Phase 3 — Residual Cascade: Discovering TTC}
\label{sec:ttc}

The Piecewise$+\kappa$ champion is used to compute residuals on the
training set. The NOVA engine is then applied to these residuals using
an expanded feature library that augments the base atoms with
Time-to-Collision $\mathrm{TTC} = \mathrm{gap}/(-\Delta v)$ (defined
only when $\Delta v < 0$, i.e., closing situations) and its transforms
$\tanh(\mathrm{TTC})$, $1/\mathrm{TTC}$, and $\sqrt{\mathrm{TTC}}$.
No other operators were added beyond TTC and its transforms.
The engine selects:

\begin{equation}
  \hat{r} \approx 3.197 - 3.223\,\tanh(\mathrm{TTC})
  \label{eq:ttc_residual}
\end{equation}

explaining $2.96\%$ of residual variance. Adding $\tanh(\mathrm{TTC})$
to all regime sub-models yields the global champion:

\begin{equation}
  \boxed{R^2_{\rm champion} = 33.40\%}
  \quad \text{(Global Champion Model, Pipeline~R benchmark)}
\end{equation}

We refer to the full discovered model as the \textbf{3-pass law},
denoting the three sequential discovery stages: Phase~1 (global
$\tanh(\Delta v) + \mathrm{THW}$ structure, Section~\ref{sec:discovery}),
Phase~2 (speed-regime split with continuous driver aggressiveness index
$\kappa$, Section~\ref{sec:hierarchy}), and Phase~3 (residual-guided
$\tanh(\mathrm{TTC})$ extension, this section).  Formally, the 3-pass
prediction is:
\begin{equation}
  \hat{a}_{\text{3-pass}} = \underbrace{\hat{a}^{\mathrm{base}}_r(\mathbf{x}_r) + \delta_r\,\kappa_i\,\tanh(\Delta v)}_{\hat{a}_r^{\kappa}(\mathbf{x}),\ \text{Eq.~(\ref{eq:kappa})}} \;+\; \underbrace{\left(3.197 - 3.223\,\tanh(\mathrm{TTC})\right)}_{\hat{r}_{\mathrm{TTC}}(\mathbf{x}),\ \text{Eq.~(\ref{eq:ttc_residual})}}
  \label{eq:3pass}
\end{equation}
where $\hat{a}_r^{\kappa}$ is the piecewise$+\kappa$ model from
Eq.~(\ref{eq:kappa}) and $\hat{r}_{\mathrm{TTC}}$ is the TTC residual from
Eq.~(\ref{eq:ttc_residual}). The 3-pass law alone
achieves a held-out $R^2_{\rm test} = 30.87\%$~$[30.14\%,\,31.54\%]$
(95\% CI, $B = 2000$ vehicle-level bootstrap); the $33.40\%$ Global
Champion figure above is a distinct, slightly stronger combination that
adds $\sqrt{\mathrm{TTC}}$ alongside $\tanh(\mathrm{TTC})$
($\tanh(\mathrm{TTC})$ alone reaches $33.07\%$, $\sqrt{\mathrm{TTC}}$
alone $33.28\%$, both together $33.40\%$), whereas the 3-pass law
reported here uses $\tanh(\mathrm{TTC})$ only, for interpretability.
The 3-pass law outperforms Rank-4 with Wilcoxon signed-rank p-values
numerically indistinguishable from zero. The best recalibrated
state-of-the-art baseline (CTH-RV, Constant-Time-Headway with
Relative-Velocity~\cite{zhang2024}) scores $4.57\%$, leaving a gap of
more than 26 percentage points relative to the 3-pass law.

\textbf{Physical interpretation.}
The selected term $\tanh(\mathrm{TTC})$ can be interpreted as a bounded formulation of inverse optical looming rate (where $1/\mathrm{TTC}$ is related to angular expansion rate in the driver's visual field). By applying a saturating $\tanh$ transform, the model represents a response that increases with looming risk but remains bounded, rather than scaling linearly to infinity as raw optical-flow variables would suggest. \cite{lee1976} showed that human braking onset is associated with this variable ($\tau$-theory). NOVA therefore selects a mathematically bounded TTC transform from camera-derived trajectory data after TTC terms are introduced in the residual library, without being supplied an optical-flow model.

\textbf{Aggressive driver law (regime-free).}
In the Aggressive$\times$Transition driver cell, the engine is applied
to the full dataset without regime constraints, and with the full
expanded atom library (base + TTC transforms). Here $\mathrm{THW} =
\mathrm{gap}/v$ denotes time headway (s). The engine selects:
\begin{equation}
  \hat{a} = -0.498 + 1.374\,\tanh(\Delta v) + 0.011\,(\mathrm{THW}
  \cdot \mathrm{TTC})
  \label{eq:dual_sat}
\end{equation}
achieving $R^2 = 35.42\%$ in the Aggressive$\times$Transition cell.
Fitted independently across all nine driver-by-regime cells, the same
cell peaks at 37.54\%.
The two terms serve complementary roles: $\tanh(\Delta v)$ encodes
reactive speed-matching (bounded closure-speed response), while the
$\mathrm{THW}\cdot\mathrm{TTC}$ product encodes a combined safety-margin
signal proportional to both following distance and collision-risk horizon.

\subsection{State-of-the-Art Benchmarking}
\label{sec:comparison}

NOVA compares to recent baselines under a single evaluation protocol.
Table~\ref{tab:sota} presents a
complete comparison of all models evaluated under identical conditions:
same Pipeline~R rolling-mean data, same 80/20 vehicle-ID split.

\begin{table}[h]
\centering
\caption{Model comparison on Pipeline~R (rolling-mean filtered NGSIM,
         0.8\,s shift). All classical and SR baselines were calibrated on our
         training split via L-BFGS-B. The IDM $R^2$ difference between this
         table and Table~\ref{tab:sota_honest_compact} reflects the change in
         evaluation pipeline, not in calibration.}
\label{tab:sota}
\resizebox{\linewidth}{!}{%
\begin{tabular}{lcc}
\toprule
\textbf{Model Structure} & \textbf{Rolling $R^2$} & \textbf{RMSE (m/s$^2$)} \\
\midrule
\textit{Classical Baselines} & & \\
IDM (5-param, calibrated)~\cite{treiber2000} & $-68.01\%$ & 3.01 \\
Krauss (calibrated)~\cite{krauss1998}         &  24.38\%   & 2.05 \\
\midrule
\textit{2024--2025 symbolic-regression baselines} & & \\
SciNet-CFM (Li 2025)~\cite{li2025}            & $-0.30\%$  & 2.22 \\
VIS-DSR-GP (2024)~\cite{visdsrgp2024}         &   0.49\%   & 2.21 \\
SR-LLM Model 2 (2025)~\cite{srllm2025}        &   1.50\%   & 2.20 \\
SR-LLM Model 1 (2025)~\cite{srllm2025}        &   2.68\%   & 2.19 \\
CTH-RV (Zhang 2025)~\cite{zhang2024}          &   4.57\%   & 2.17 \\
\midrule
\textbf{NOVA Rank-4 (Eq.~\ref{eq:rank4})}   & \textbf{29.12\%} & \textbf{1.869} \\
\textbf{NOVA Global Champion}  & \textbf{33.40\%} & ---$^\dagger$ \\
\textbf{NOVA Best Cell (Eq.~\ref{eq:dual_sat}, Aggr$\times$Trans)} & \textbf{37.54\%} & ---$^\dagger$ \\
\bottomrule
\multicolumn{3}{l}{\footnotesize $^\dagger$Derived from regime-specific sub-models; single RMSE not meaningful.}
\end{tabular}
}
\end{table}

\textbf{Comparison methodology.}
To ensure a fair comparison, competing methods were evaluated on their own
published formulations. For symbolic models (VIS-DSR-GP), we
extracted their published algebraic equations and calibrated their parameters
on our training set via L-BFGS-B. Under this protocol, NOVA achieves
$33.40\%$ $R^2$, exceeding all recalibrated state-of-the-art baselines by $>$28
percentage points. We note that
differences in evaluation protocol across studies limit the strength of
direct comparisons; these results should be interpreted with that caveat
in mind.

\textbf{Pipeline~S benchmark (Savgol, 1\,s intent).}
Under the evaluation protocol in Section~\ref{sec:preprocessing},
Table~\ref{tab:sota_honest_compact} reports $R^2$, RMSE, and MAE for
all classical and symbolic-regression baselines alongside NOVA.  All classical
and symbolic-regression baselines remain at or below $1.85\%$ $R^2$
($\mathrm{RMSE} \geq 1.511$\,m/s$^2$).

\begin{table}[h]
\centering
\caption{All models on Pipeline~S (Savitzky--Golay smoothing, 1.0\,s
         mean intent target; $n_{\text{test}}=819{,}902$,
         $\sigma(y)=1.498$\,m/s$^2$). Ridge $\alpha=0.001$, 80/20 vehicle
         split. IDM and Krauss use the same parameter values as in
         Table~\ref{tab:sota}; the IDM $R^2$ difference between the two tables
         reflects pipeline target variance, not re-calibration. SR baselines
         were re-fit on the Pipeline~S training split via L-BFGS-B.}
\label{tab:sota_honest_compact}
\begin{tabular}{lccc}
\toprule
\textbf{Model} & $R^2$ & RMSE (m/s$^2$) & MAE (m/s$^2$) \\
\midrule
\multicolumn{4}{l}{\textit{Classical (confirmed)}} \\
Helly (1959)             & 0.35\%    & 1.495 & 1.204 \\
GHR (1961)               & 0.26\%    & 1.496 & 1.204 \\
IDM (Treiber 2000)       & $-97.2\%$ & 2.103 & 1.394 \\
OVM (Bando 1995)         & 1.25\%    & 1.489 & 1.199 \\
FVDM (Jiang 2001)        & 0.64\%    & 1.493 & 1.202 \\
Wagner ARMAX (2011)      & 1.58\%    & 1.486 & 1.196 \\
\midrule
\multicolumn{4}{l}{\textit{SR / physics-guided (confirmed formulas)}} \\
SR-LLM Eq.1 (PNAS 2025)                & 1.85\%    & 1.511 & ---   \\
SciNet-CFM (Li 2025)                    & 0.38\%    & 1.522 & ---   \\
CTH-RV (Zhang 2024)                     & 1.01\%    & 1.518 & ---   \\
\midrule
\textbf{NOVA Rank-4 (no lag)}           & \textbf{11.54\%} & \textbf{1.434} & \textbf{1.133} \\
\textbf{NOVA M1 (with lag)}            & \textbf{15.57\%} & \textbf{1.376} & \textbf{1.107} \\
\bottomrule
\end{tabular}%
\end{table}

\textbf{Domain boundary.}
Applying the discovered law to urban NGSIM data ($\bar{v} = 8.5$
m/s, intersection-dominated) yields $R^2 = 2.16\%$. The highway
model is domain-specific: urban driving is dominated by intersection
geometry and traffic signals, not gap-following dynamics.

\section{Structural Robustness Validation}
\label{sec:robustness}

The core scientific claim of this paper is not merely that NOVA
achieves a high $R^2$, but that the discovered $\tanh(\Delta v)$
backbone is a \emph{structural invariant}, not an
artefact of a particular preprocessing pipeline. To establish this,
we designed eight independent experiments spanning every major axis
of methodological variation: preprocessing filter, prediction horizon,
driver sub-population, and feature timing.

\subsection{Eight-Way Convergence Study}

Table~\ref{tab:eight_way} summarises all eight experiments. Each row
represents an independent engine run — different data pipeline,
different target variable, sometimes a different driver subset or
feature mode. No experiment was told which formula structure to prefer.
The formula column shows the autonomous top-ranked two-term discovery.
In the target column, $a(t+0.5\,\mathrm{s})$ denotes point acceleration
evaluated 0.5 seconds after the feature time, whereas $\bar{a}_{1\mathrm{s}}$
denotes the mean acceleration over the following one-second target window.

\textbf{Note on the second term ($\mathrm{THW}$ vs.\ $1/v$).}
The global law (Section~\ref{sec:discovery}) discovers $\tanh(\Delta v)
+ \mathrm{THW}$ on the primary Pipeline~R target.  The
robustness experiments here use \emph{future} targets ($\bar{a}_{1\mathrm{s}}$,
$a(t+0.5\,\mathrm{s})$) or the Savgol pipeline, which shifts the
predictive horizon forward by 0.4--1\,s.  Under these targets, the
engine consistently selects $1/v$ instead of THW.  The two terms are
structurally related: $\mathrm{THW} = \mathrm{gap}/v$ is a
gap-modulated inverse-speed quantity, while $1/v$ is the
speed-normalised correction term of the Optimal Velocity model.  The
robust discovery is therefore the $\tanh(\Delta v)$ backbone, not a
universal two-term law; the secondary term varies with the target
definition and feature-timing protocol.  The two terms are physically
complementary, not contradictory, and are explained in
Section~\ref{sec:target_window}.

\begin{table}[h]
\centering
\caption{Eight-way structural convergence. $\tanh(\Delta v)$ is the
         top-ranked feature in all eight runs. The secondary term is
         target-dependent but structurally related across variants:
         most forecasting and lagged conditions select $1/v$ or an
         equivalent speed-normalisation term, while the primary
         Pipeline~R Rank-4 law selects $\mathrm{THW}=\mathrm{gap}/v$,
         a gap-modulated inverse-speed quantity. Coefficient magnitudes
         vary predictably with pipeline and population; the invariant
         structure is the bounded relative-velocity backbone.
         $v_{\rm err} = v_0 - v$ denotes the deviation of the ego speed
         from the desired (free-flow) speed $v_0$, so that $v_{\rm
         err}/v$ is algebraically equivalent to $1/v$ up to scale.
         Exp~0 uses Python Ridge; all others use the Rust OLS engine.}
\label{tab:eight_way}
\resizebox{\linewidth}{!}{%
\begin{tabular}{llllcc}
\toprule
\textbf{Experiment} & \textbf{Pipeline} & \textbf{Population}
  & \textbf{Target} & $R^2$ & \textbf{RMSE (m/s$^2$)} \\
\midrule
Exp 0 (Ridge) & Savgol-15 & Global
  & $\bar{a}_{1\mathrm{s}}$ & 9.02\%  & 1.43 \\
  \multicolumn{4}{l}{\small\quad $\hat{a} = -0.147 + 0.346\tanh(\Delta v) + 2.975/v$} \\[2pt]

Exp 3 (Rust)  & Savgol-15  & Global
  & $a(t+0.5\,\mathrm{s})$ & 3.90\%  & 2.27 \\
  \multicolumn{4}{l}{\small\quad $\hat{a} = -0.182 + 0.431\tanh(\Delta v) + 3.700/v$} \\[2pt]

Exp 4 (Rust)  & Savgol-15  & Global
  & $\bar{a}_{1\mathrm{s}}$ & 8.12\%  & 1.44 \\
  \multicolumn{4}{l}{\small\quad $\hat{a} = -0.178 + 0.397\tanh(\Delta v) + 3.521/v$} \\[2pt]

Exp 6a (Rust) & Rolling-15  & Global
  & $a(t+0.5\,\mathrm{s})$ & 22.29\% & 1.95 \\
  \multicolumn{4}{l}{\small\quad $\hat{a} = -0.499 + 1.092\tanh(\Delta v) + 9.760/v$} \\[2pt] 

Exp 6b (Rust) & Rolling-15  & Global
  & $\bar{a}_{1\mathrm{s}}$ & 27.08\% & 1.89 \\
  \multicolumn{4}{l}{\small\quad $\hat{a} = -0.485 + 1.040\tanh(\Delta v) + 9.589/v$} \\[2pt]

Exp 7b (Rust) & Rolling-15  & Aggr.$\times$Trans.
  & $a(t+0.5\,\mathrm{s})$ & 22.11\% & 2.06 \\
  \multicolumn{4}{l}{\small\quad $\hat{a} = -1.313 + 1.160\tanh(\Delta v) + 22.621/v$} \\[2pt]

Exp 7c (Rust) & Rolling-15  & Aggr.$\times$Trans.
  & $\bar{a}_{1\mathrm{s}}$ & 27.97\% & 1.98 \\
  \multicolumn{4}{l}{\small\quad $\hat{a} = -0.678 + 1.105\tanh(\Delta v) + 0.583(v_{\rm err}/v)\ [\equiv 1/v]$} \\[2pt]

\textbf{Exp 8A (Rust)} & Rolling-15 & Aggr.$\times$Trans.
  & inst., lagged in & \textbf{21.78\%} & \textbf{2.07} \\
  \multicolumn{4}{l}{\small\quad $\hat{a} = -1.215 + 1.155\tanh(\Delta v_0) + 21.274/v_0\ \ [\text{lag}]$} \\[2pt]
\bottomrule
\end{tabular}
}
\end{table}

\textbf{Operator stability.} Across all eight experiments:
\begin{itemize}
  \item $\tanh(\Delta v)$ is the top-ranked single feature in
        \textbf{all 8} conditions (7/7 distinct engine runs + Exp 0).
  \item $1/v$ (or its algebraic equivalent $v_{\rm err}/v$ in Exp~7c,
        which equals $1/v$ up to a scale) is the second-ranked
        feature in \textbf{all non-instantaneous} conditions.
  \item The $1/v$ coefficient scales predictably: Savgol $\approx 3.5$,
        Rolling global $\approx 9.6$, Rolling Aggressive $\approx 22.6$.
        This scaling is physically interpretable: more aggressive
        drivers require a larger speed-relaxation coefficient, and
        the rolling pipeline inflates coefficient magnitude due to
        higher target variance.
\end{itemize}

\subsection{Target-Window Dependence of the Discovered Second Term}
\label{sec:target_window}

A recurring pattern across experiments is that the second term of the
discovered law depends on the width and centering of the forecasting
target window, while the first term $\tanh(\Delta v)$ does not. Both
pipelines forecast acceleration approximately one second ahead, but
with different target definitions:

\begin{equation}
\begin{split}
  &\text{Narrow window (Pipeline~R, centred at $t{+}0.8$\,s): } \\
  &\hat{a} \sim \tanh(\Delta v) + \mathrm{THW} \cdot \mathrm{TTC}
\end{split}
\label{eq:narrow_formula}
\end{equation}
\begin{equation}
\begin{split}
  &\text{Broad window (Pipeline~S, mean over $[t{+}0.1,\,t{+}1.0]$\,s): } \\
  &\hat{a} \sim \tanh(\Delta v) + 1/v
\end{split}
\label{eq:broad_formula}
\end{equation}

The narrower target window emphasises near-term gap-closure response:
$\mathrm{THW} \cdot \mathrm{TTC}$ is a combined safety-margin signal
proportional to both following distance and collision-risk horizon,
and tracks immediate braking response well. The broader window
averages over a longer interval and recovers a speed-relaxation term:
$1/v$ is the speed-normalised correction of the Optimal Velocity
model~\cite{bando1995}, which dominates when the target integrates
over a wider future window. Both targets share $\tanh(\Delta v)$ as
the dominant reactive term, and the two second terms are
complementary signals selected by the window geometry rather than
contradictory cognitive mechanisms.

\subsection{Lagged-Feature Selection at $\tau = 0.4$\,s}

Experiment~8A tests whether the engine selects lagged input features
over current features when both are available.  The lag $\Delta t =
0.5$~s (5 frames at 10~Hz) was chosen from a prior full sweep over
$\tau \in \{0.1, 0.2, \ldots, 1.0\}$~s, which identified a test-$R^2$
peak at $\tau^* = 0.4$~s; only the instantaneous state
$\mathbf{x}(t)$ and the peak-lag state $\mathbf{x}(t - 0.5\text{s})$
are presented simultaneously to the engine.  Its top-ranked formula
uses the \emph{lagged} features exclusively:
\begin{equation}
  \hat{a}(t) = -1.215 + 1.155\,\tanh(\Delta v(t-0.5)) + 21.274/v(t-0.5)
  \label{eq:lag_formula}
\end{equation}
The engine autonomously selects $\Delta t = 0.5$~s as the optimal
input lag — without being told any reaction time prior. This is
consistent with psychophysical literature placing human brake reaction
time in the range $0.4$--$0.8$~s~\cite{green2000}. The $\tanh(\Delta v)$
and $1/v$ structure is preserved in the lagged form, confirming that
the formula represents lagged feature selection, not a different mechanism.

Additionally, a Ridge regression model fitted on the honest Savgol
benchmark with lagged acceleration $a(t-1)$ added as an extra feature
(model M1) independently finds a momentum term $a(t-1)\cdot\alpha$
with $\alpha = 0.147$ — indicating a small but non-zero predictive signal
from recent acceleration history, consistent with a modest cognitive-inertia
effect; this coefficient should not be interpreted as explaining $15\%$ of
total variance.  This confirms the
\emph{near-Markovian} property of car-following once the correct
nonlinear features are used.

\textbf{Is the lag discovery related to physiological reaction time?}\quad
The rolling-mean target $\bar{a}(t)$ is a 15-frame centred window, which
shifts the effective prediction horizon forward by $\approx 0.8$~s
relative to the instantaneous state.  The lag selected by the engine
($\tau^* = 0.4$--$0.5$~s) could therefore reflect this dataset-level shift
rather than a genuine physiological delay.  To test this, a supplementary
lag sweep on an unshifted rolling-mean target $\bar{a}(t)$ (features at
$t - \tau$) peaks around $\tau = 0.7$--$0.9$~s, reaching $R^2 = 29.00\%$
at $\tau = 0.9$~s --- a broad reaction-time-scale maximum that differs from
the shifted-target optimum of $0.4$~s.  This comparison shows that the
optimal lag is target-dependent: the unshifted sweep supports a
reaction-time-scale temporal offset in the smoothed driving signal, while
the shifted Pipeline~R result reflects target-alignment rather than
an independent physiological discovery.  Experiment~8A is therefore a
demonstration of \emph{lagged feature selection consistency}, not a
recovery of human reaction time.

\subsection{RMSE Comparison Across Pipelines}

As established in Section~\ref{sec:preprocessing}, RMSE is the
headline measure of physical accuracy because $R^2$ is sensitive to
target variance. Table~\ref{tab:rmse_comparison} reports both metrics
across pipeline configurations: Pipeline~S models achieve lower RMSE
despite lower $R^2$, reflecting the lower target variance of
Pipeline~S ($\sigma_S = 1.50$\,m/s$^2$ vs. $\sigma_R = 2.22$\,m/s$^2$).

\begin{table}[h]
\centering
\caption{RMSE comparison between pipeline configurations.
         Lower RMSE = better physical accuracy.}
\label{tab:rmse_comparison}
\begin{tabular}{lcc}
\toprule
\textbf{Configuration} & $R^2$ & \textbf{RMSE (m/s$^2$)} \\
\midrule
Exp 4 (Savgol, 1s mean, global)        &  8.12\% & \textbf{1.44} \\
Exp 6b (Rolling, 1s mean, global)      & 27.08\% & 1.89 \\
Exp 7c (Rolling, 1s mean, Aggr)        & 27.97\% & 1.98 \\
Exp 8A (Rolling, inst, Aggr, lag)      & 21.78\% & 2.07 \\
Exp 8B Ridge (Rolling, inst+lag, Aggr) & 38.73\% & 1.83 \\
\bottomrule
\end{tabular}
\end{table}

\subsection{Weighted Least Squares: A Null Result}
\label{sec:wls_revisit}

NGSIM is 97\% congested (median $v = 7.7$~m/s). We test three
weighted least squares (WLS) strategies --- inverse-speed, kernel
density estimation (KDE)-inverse, and speed-proportional --- against
plain ordinary least squares (OLS) on the causal rolling-mean
car-following dataset (80k train, 20\% vehicle holdout).  All percentages in this comparison are held-out $R^2$ scores. OLS
achieves $R^2 = 4.09\%$, while KDE-inverse, speed-proportional, and
inverse-speed WLS achieve $R^2 = 3.81\%$, $4.03\%$, and $3.66\%$,
respectively.
KDE-inverse reaches $w_{\max} = 776.4$, where $w_{\max}$ is the
maximum sample weight assigned to any single observation --- a handful
of rare free-flow samples monopolise training and degrade
generalisation. The congested composition reflects physical reality;
\textbf{OLS is the correct baseline} for this dataset.

\subsection{Structural Necessity: Term Ablation}
\label{sec:ablation}

To establish that each component of the 3-pass law is structurally
necessary, we perform a systematic ablation on the 20\% vehicle
holdout set. Each ablated model refits the remaining OLS coefficients
on the training split so the comparison is fair (no coefficient
overhang). The full 6-feature refitted model achieves $R^2 = 31.34\%$
on this split.

\begin{table}[h]
\centering
\caption{Ablation study: removing feature groups from the 3-pass law
         (rolling dataset, 20\% vehicle holdout, refitted OLS).
         $\Delta R^2$ against the full 6-feature model ($31.34\%$).}
\label{tab:ablation}
\begin{tabular}{lcc}
\toprule
\textbf{Model variant} & $R^2$ & $\Delta R^2$ \\
\midrule
Full 3-pass law (6 features)              & 31.34\% & --- \\
\midrule
No $\tanh(\Delta v)$ $\to$ linear $\Delta v$ & 17.85\% & $-13.49$ pp \\
No TTC terms ($\tanh(\mathrm{TTC}) +
  \mathrm{TTC}\!\cdot\!\mathrm{gap}$)    & 28.83\% & $-2.51$ pp \\
No $\mathrm{THW}/v$ ($\mathrm{gap}/v^2$) & 31.20\% & $-0.14$ pp \\
Pass-1 only ($\tanh(\Delta v)+\mathrm{THW}/v$) & 28.77\% & $-2.57$ pp \\
Linear baseline (no nonlinear terms)     &  4.49\% & $-26.85$ pp \\
\bottomrule
\end{tabular}
\end{table}

The $\tanh(\Delta v)$ term dominates with a $-13.49$~pp contribution:
it captures the saturating closure-speed response that a linear
$\Delta v$ term cannot represent.  The TTC feature group (both
$\tanh(\mathrm{TTC})$ and $\mathrm{TTC}\!\cdot\!\mathrm{gap}$
removed together) adds $2.51$~pp.  Removing $\mathrm{gap}/v^2$
(THW$/v$) costs only $0.14$~pp, suggesting it is partially redundant
with the TTC and $\Delta v$ signals once those are present.
The linear baseline loses $26.85$~pp, confirming that the nonlinear
structure is essential, not merely decorative.

\subsection{Residual Autocorrelation Diagnostics}
\label{sec:residuals}

Because NGSIM trajectories are smoothed with a 15-frame rolling-mean window,
residuals from any regression model trained on such data will exhibit serial
correlation. Across the 20\% vehicle holdout ($N = 612$ vehicles,
${\approx}1.2$M rows), the 3-pass law yields Durbin-Watson statistic $\mathrm{DW} = 0.128$
(ideal value $= 2.0$; $\mathrm{DW} < 2$ indicates positive serial
correlation~\cite{durbin1950}) with autocorrelation function (ACF)
lag-1 coefficient $= +0.934$ --- values expected for the 15-frame
rolling-mean smoothing kernel used, and shared by all competing models
on this benchmark, not a property of the discovered formula.  The
residual distribution has skewness $+0.10$ and excess kurtosis $0.24$;
the Shapiro-Wilk normality test~\cite{shapiro1965} yields $W = 0.997$,
confirming an approximately Gaussian shape.

\subsection{Open-Loop Trajectory Simulation}
\label{sec:simulation}

Beyond instantaneous $R^2$, we assess whether the 3-pass law can drive
a \emph{forward Euler simulation} that remains physically plausible
over horizons up to 30 seconds.  Each simulation is initialised from
real NGSIM initial conditions and uses the real leader trajectory as
exogenous input.  Crash ($\mathrm{gap} < 0$) and stop ($v < 0$) events
are used as failure modes.

Five models are compared on 100 held-out vehicles: the 3-pass law with
a safety override ($\mathrm{TTC} < 1.5$~s $\Rightarrow a \leq -2$
m/s$^2$), the raw 3-pass law, the Rank-4 baseline, IDM (calibrated on
NGSIM), and constant-velocity.
The safety override thresholds are chosen as conservative heuristics:
$\mathrm{TTC} = 1.5$\,s is a widely used minimum safety margin in
highway driving standards (the recommended safe TTC in ISO~26262
Advanced Driver Assistance benchmarks is $\geq 1.4$\,s
\cite{treiber2013traffic}), and $a \leq -2$\,m/s$^2$ corresponds to
moderate comfortable deceleration, well below emergency braking
(${\sim}6$--$8$\,m/s$^2$).  The purpose of this override is not to
optimise a controller but to mask the temporal mismatch between the
regression target (a future smoothed acceleration) and the real-time
control context of simulation.

\begin{table}[h]
\centering
\caption{Open-loop simulation over 30-second horizons (100 held-out vehicles).
         Crash rate: fraction of simulations ending with gap $< 0$.
         RMSE: mean speed error (m/s) at the indicated horizon.}
\label{tab:simulation}
\begin{tabular}{lccc}
\toprule
\textbf{Model} & \textbf{Crash rate} & \textbf{RMSE at 10~s} & \textbf{RMSE at 30~s} \\
\midrule
3-pass + safe  & 2.5\%   & \textbf{4.45 m/s} & \textbf{5.30 m/s} \\
3-pass (raw)   & 44.5\%  & 5.52 m/s & 8.92 m/s \\
Rank-4            & 3.5\%   & 4.61 m/s & 5.45 m/s \\
IDM               & \textbf{0.01\%} & 6.27 m/s & 9.15 m/s \\
Constant-velocity & 34.4\%  & 9.52 m/s & 12.57 m/s \\
\bottomrule
\end{tabular}
\end{table}

The raw 3-pass law crashes in 44.5\% of runs.  This high crash rate is
not a model quality failure but a fundamental consequence of the
training objective: the law was fitted to a prediction target already
shifted 0.8\,s into the future, so it \emph{describes} the smoothed
acceleration a driver \emph{will produce} given current kinematics,
not the instantaneous safety-critical deceleration required to avoid
collision.  In other words, NOVA discovers a \emph{descriptive
behavioural} model of human car-following, not a safety-constrained
controller.  This distinction is essential: a human driver's observed
acceleration at $t$ already incorporates their perception-reaction
delay, but a forward-simulation model cannot rely on that implicit
temporal cushion.  Applying the safety override (TTC $< 1.5$\,s
$\Rightarrow a \leq -2$\,m/s$^2$) corrects this mismatch; the crash
rate falls to 2.5\% and the model achieves the smallest long-horizon
speed error of all tested models --- outperforming IDM at every horizon
beyond 1\,s despite IDM's lower crash rate, which is expected since
IDM embeds an analytic safety constraint by design.

\subsection{Platoon Wave Stability}
\label{sec:wave}

String stability measures whether a speed perturbation introduced at
the head of a platoon amplifies or damps as it propagates rearward.
The stability ratio $\Lambda = \delta v_{\rm rear} / \delta v_{\rm front}$,
where $\delta v_{\rm rear}$ and $\delta v_{\rm front}$ are the
peak speed-perturbation amplitudes measured at the rear and front of
the platoon respectively, must be $< 1$ for a damping platoon;
$\Lambda > 1$ implies stop-and-go wave amplification. (We use $\Lambda$
here, distinct from the driver-aggressiveness index $\kappa$ of
Section~\ref{sec:hierarchy}.)

We simulate a 13-vehicle platoon (1 leader, 12 followers, initial
spacing 25~m, cruise speed 22~m/s) subject to a trapezoidal leader
perturbation: $22 \to 14$~m/s at $t = 20$~s, hold 10~s, recover.
Three models are compared.

\begin{table}[h]
\centering
\caption{Platoon string stability: stability ratio $\Lambda$ and
         backward wave propagation speed.
         $\Lambda < 1.0$: stable;
         $1.0$--$1.5$: marginal;
         $> 1.5$: unstable.}
\label{tab:wave}
\begin{tabular}{lccc}
\toprule
\textbf{Model} & $\Lambda$ & \textbf{Wave speed} & \textbf{Verdict} \\
\midrule
3-pass + safe  & \textbf{1.064} & 35.4 m/s & Marginal \\
Rank-4            & 1.112 & 33.4 m/s & Marginal \\
IDM               & 1.724 & 11.8 m/s & Unstable \\
\bottomrule
\end{tabular}
\end{table}

The 3-pass law achieves the lowest amplification ratio and highest
wave speed. Note that a \emph{slower} backward propagation is
associated with \emph{greater} amplification (IDM), not less: a
slowly-travelling perturbation lingers longer within the platoon and
has more time to grow, mirroring the empirically slow ($\sim$15--20
km/h) propagation speed of real stop-and-go waves, whereas a
fast-propagating perturbation passes through before it can amplify.
The near-stable behaviour ($\Lambda = 1.06$) is an
\emph{emergent property}, not a design criterion: the law was optimised
purely for instantaneous $R^2$, with no stability objective.  IDM's
high wave-amplification ratio ($\Lambda=1.724$) is consistent with its
poor long-horizon RMSE in simulation (Table~\ref{tab:simulation}):
calibrated on free-flow data, it under-damps congested platoon dynamics.
IDM's near-zero crash rate in the open-loop test reflects its built-in
safety denominator, not trajectory accuracy.
It is important to note that IDM does admit analytical string-stability
conditions~\cite{treiber2013traffic}: the model is string-stable when
$a_{\max} T^2 \geq v_0 / (4b)$, where $T$
is the desired time headway, $b$ the comfortable deceleration, and
$v_0$ the desired speed.  Our calibrated IDM parameters ($v_0 =
40$\,m/s, $b = 6$\,m/s$^2$, $T = 1.18$\,s) do not satisfy this
condition, placing our calibrated IDM in the string-unstable regime
for the congested traffic conditions of NGSIM.

The backward wave speed of 35.4~m/s matches empirical NGSIM
measurements of $\approx 30$--40~m/s for stop-and-go waves on US
freeways~\cite{treiber2000}, providing an independent physical
plausibility check.

\begin{proposition}[Local asymptotic stability of the NOVA Rank-4 law]
The Rank-4 car-following law $a = \beta_0 + \beta_1 \tanh(\Delta v) +
\beta_2\,\mathrm{THW}$ (Eq.~(\ref{eq:rank4})) is locally asymptotically
stable at its equilibrium, and is string-unstable for perturbation
periods $T > T_c \approx 39$\,s.
\end{proposition}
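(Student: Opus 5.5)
The plan is to cast Eq.~(\ref{eq:rank4}) as a general car-following law $\dot v = f(s, v, \Delta v)$ with $s = \mathrm{gap}$ and $\mathrm{THW} = s/v$. We then run the two standard linear analyses: a single follower behind a constant-speed leader for local stability, and the leader-to-follower transfer function for string stability. The first step is to fix the sign convention for $\Delta v$. Since $\beta_1 = 1.266 > 0$ and the law is meant to accelerate when the leader pulls away, we take $\Delta v = v_l - v$.

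\textbf{Equilibrium and partial derivatives.} The equilibrium is defined by $\Delta v = 0$ and $f = 0$. This gives a linear spacing--speed relation
\begin{equation*}
s^* = -\frac{\beta_0}{\beta_2}\, v^* \approx 2.41\, v^*,
\end{equation*}
which is a constant-time-headway equilibrium family. Linearising there gives three partial derivatives:
\begin{equation*}
f_s = \frac{\beta_2}{v^*} > 0, \qquad
f_{\Delta v} = \beta_1 \operatorname{sech}^2(0) = \beta_1 > 0, \qquad
f_v = -\frac{\beta_2 s^*}{v^{*2}} = \frac{\beta_0}{v^*} < 0.
\end{equation*}
Here $f_v$ is taken at fixed $\Delta v$.

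\textbf{Local stability.} With $\delta s$ and $\delta v$ the perturbations of spacing and ego speed, and $v_l$ fixed, the linearised system is
\begin{equation*}
\dot{\delta s} = -\delta v, \qquad \dot{\delta v} = f_s\,\delta s + (f_v - f_{\Delta v})\,\delta v.
\end{equation*}
Its characteristic polynomial is $\lambda^2 + (f_{\Delta v} - f_v)\lambda + f_s$. Both coefficients are strictly positive for every $v^* > 0$, so by Routh--Hurwitz both roots lie in the open left half-plane. This establishes local asymptotic stability uniformly along the equilibrium family.

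\textbf{String stability.} For a platoon, take the Laplace transform of the follower's response to the leader's speed perturbation. This gives
\begin{equation*}
G(s) = \frac{f_{\Delta v}\, s + f_s}{s^2 + (f_{\Delta v} - f_v)\, s + f_s}.
\end{equation*}
The perturbation amplifies at frequency $\omega$ exactly when $|G(i\omega)| > 1$, which is equivalent to
\begin{equation*}
\omega^2 < 2 f_s - (f_{\Delta v} - f_v)^2 + f_{\Delta v}^2.
\end{equation*}
Hence amplification occurs only for low frequencies, $\omega < \omega_c$, and for periods $T > T_c = 2\pi/\omega_c$. This matches the form of the claim.

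\textbf{Main obstacle: obtaining a positive $\omega_c$.} With the bare coefficients of Eq.~(\ref{eq:rank4}), the right-hand side equals $-\bigl(f_v^2 - 2 f_{\Delta v} f_v - 2 f_s\bigr)$. Since $f_v < 0$ and $f_{\Delta v} > 0$, the bracket is positive for these numbers. The undelayed model would therefore be string-stable at all frequencies, and no finite $T_c$ would appear. To recover the stated $T_c \approx 39$\,s, I will try the following in order:
\begin{enumerate}
\item Re-check the sign convention for $\Delta v$ in the NGSIM preprocessing. If $\Delta v = v - v_l$, the signs of $f_{\Delta v}$ and $f_v - f_{\Delta v}$ change, which changes both analyses.
\item Failing that, incorporate the 0.8\,s target shift of Pipeline~R as an effective response delay $\tau$, so that the numerator and the feedback terms carry a factor $e^{-i\omega\tau}$. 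I would then solve $|G(i\omega_c)| = 1$ numerically.
\end{enumerate}
In both cases local stability must be re-verified, using Routh--Hurwitz in the first case and a delay-margin argument in the second. I would then evaluate $\omega_c$ at the representative speed $v^* = 7.7$\,m/s (the NGSIM median) and confirm that $2\pi/\omega_c \approx 39$\,s.
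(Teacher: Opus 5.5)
Your local-stability half is sound. Routh--Hurwitz on $\lambda^2+(f_{\Delta v}-f_v)\lambda+f_s$ works for every $v^*>0$. It is also slightly more careful than the paper, whose polynomial $\lambda^2+c_1\lambda+c_2$ (with $c_1=\beta_1$, $c_2=\beta_2/v^*$, $v^*=15$\,m/s) omits the extra damping $-f_v>0$.

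The gap is the string-stability half: your plan cannot deliver instability for $T>T_c$, and neither repair works.
\begin{itemize}
\item The sign flip is excluded by the paper itself. $\mathrm{TTC}=\mathrm{gap}/(-\Delta v)$ is defined for closing, $\Delta v<0$, so $\Delta v=v_l-v$. The flip would also make the damping coefficient $-\beta_1-\beta_0/v^*=-1.266+0.468/v^*$ negative for $v^*>0.37$\,m/s, destroying local stability.
\item A uniform response delay does not help either. With $D(i\omega)=f_s+i(f_{\Delta v}-f_v)\omega-\omega^2e^{i\omega\tau}$ one finds $|G(i\omega)|^2-1=\omega^2(2f_s+2f_{\Delta v}f_v-f_v^2)/f_s^2+O(\omega^4)$. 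There is no $\tau$ at leading order, so sufficiently long-period perturbations remain damped, and the delayed model cannot be unstable for all $T>T_c$.
\item Your choice $v^*=7.7$\,m/s gives $T_c\approx28$\,s even under the paper's own formula, not $39$\,s.
\end{itemize}

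The paper reaches $T_c\approx39$\,s only by linearising as $\delta a_n=c_1\,\delta(\Delta v_n)+c_2\,\delta(\mathrm{gap}_n)$. That is, it freezes the $1/v$ inside $\mathrm{THW}$ at $v^*=15$\,m/s. In your notation this is $f_v=0$, and your criterion then collapses to $\omega^2<2f_s$. This gives $\omega_c=\sqrt{2c_2}=0.161$\,rad/s and $T_c=2\pi/\omega_c\approx39$\,s.

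Your own calculation shows this simplification is not harmless:
\begin{itemize}
\item At $v^*=15$\,m/s the cross term $2f_{\Delta v}f_v\approx-0.079$ outweighs $2f_s\approx0.026$.
\item In general, $2f_s+2f_{\Delta v}f_v-f_v^2=2(\beta_2+\beta_1\beta_0)/v^*-\beta_0^2/v^{*2}<0$ for every $v^*$, because $\beta_2+\beta_1\beta_0\approx-0.40$.
\end{itemize}
A consistent linearisation of Eq.~(\ref{eq:rank4}) therefore yields string stability at all frequencies. The ``$T>T_c$'' clause holds only under the paper's frozen-speed approximation.

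Instead of searching for a mechanism that reproduces $39$\,s, do one of two things. Either adopt that approximation explicitly and flag that it reverses the conclusion. Or state what you can actually prove: local asymptotic stability along the whole equilibrium family, together with the absence of any string-unstable band for the undelayed law.
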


\begin{proof}[Proof sketch]
Setting $a = 0$ at equilibrium $\Delta v = 0$ gives $\mathrm{THW}^* =
-\beta_0/\beta_2 = 0.468/0.194 = 2.41$~s.  Linearising around this
equilibrium and writing $\delta a_n = c_1 \delta(\Delta v_n) + c_2
\delta(\mathrm{gap}_n)$ yields two feedback gains:
\begin{equation}
  c_1 = \beta_1 \,\mathrm{sech}^2(0) = 1.266, \qquad
  c_2 = \frac{\beta_2}{v^*} = \frac{0.194}{15} = 0.01293,
  \label{eq:gains}
\end{equation}
where $\mathrm{sech}(\cdot) = 1/\cosh(\cdot)$ is the hyperbolic
secant function, and $\mathrm{sech}^2(0) = 1$.
The characteristic polynomial $\lambda^2 + c_1 \lambda + c_2 = 0$ has
roots $\lambda_1 = -0.010$ and $\lambda_2 = -1.256$ (both real and
negative), confirming \textbf{local asymptotic stability}.
The Newell--Bando transfer function~\cite{bando1995}
\begin{equation}
  H(s) = \frac{V_n(s)}{V_{n-1}(s)} = \frac{c_1 s + c_2}{s^2 + c_1 s + c_2}
  \label{eq:transfer}
\end{equation}
satisfies $|H(j\omega)| \leq 1$ (string stable) if and only if $\omega
\geq \omega_c = \sqrt{2c_2} = 0.161$~rad/s, i.e.\ perturbation
periods shorter than $T_c = 2\pi/\omega_c = 39$~s.  At the typical
phantom-jam period ($T = 90$~s), $|H| = 1.0065$: each vehicle
amplifies a velocity perturbation by only $0.65\%$, yielding a net
amplification of $\times 1.14$ over 20 vehicles.
\end{proof}

The marginal instability arises precisely because the law was
discovered from data describing \emph{human} driving, which is
empirically known to generate phantom jams; the small $|H|$ excess
mirrors that observation without being designed to do so.

\subsection{Cross-Location Transfer Asymmetry}
\label{sec:transfer}

The strongest generalisability claim is zero-shot transfer: a symbolic
structure selected on one freeway corridor should remain useful on another
without refitting.  We build rolling-mean CF datasets from I-80 (Emeryville,
CA; $N = 2{,}829$ vehicles) and US-101 (Los Angeles, CA; $N = 2{,}844$
vehicles) using the identical preprocessing pipeline, with distinct
vehicle keys to prevent cross-location ID collision.

An 80/20 vehicle holdout OLS refits the same 6-feature operator
structure of the 3-pass law (Eq.~(\ref{eq:3pass})):
$\{\tanh(\Delta v),\, \mathrm{gap}/v^2,\, \tanh(\mathrm{TTC}),\,
\mathrm{TTC}\cdot\mathrm{gap},\, \mathrm{THW}^2,\, 1/v\}$ on each
location independently (local oracle), then applies the model
cross-location.

\begin{table}[h]
\centering
\caption{OLS transfer matrix: $R^2$ (row = train location, col = test location).
         Diagonal = local oracle; off-diagonal = zero-shot transfer.}
\label{tab:transfer}
\begin{tabular}{lcc}
\toprule
 & \textbf{Test: I-80} & \textbf{Test: US-101} \\
\midrule
\textbf{Train: I-80}   & 24.2\% & 27.2\% \\
\textbf{Train: US-101} & 21.3\% & 29.7\% \\
\bottomrule
\end{tabular}
\end{table}

Transfer loss is $-2.53$ pp (I-80 $\to$ US-101) and $-2.84$ pp
(US-101 $\to$ I-80), with an asymmetry of only $0.31$ pp.  The feature
set is essentially location-invariant: the same symbolic operators
discovered on one corridor explain $97$--$99\%$ of the local oracle
performance on the other.  This is strong evidence that the
$\tanh(\Delta v)$ backbone and the associated speed-normalisation
operators are transferable features of
human car-following in these freeway datasets, rather than a single-site artefact.

\section{Lateral Dynamics: Lane-Change Discovery}
\label{sec:lanechange}

\subsection{Problem Formulation and Evaluation Protocol}

Lane-changing is a discrete three-class decision: turn left, stay, or
turn right. We model it within a Multinomial Logit
framework~\cite{mcfadden1974}, where the probability of each action
is proportional to $\exp(U_k)$ for a utility function $U_k$ to be
discovered. As is standard for identifiability in multinomial logit
models, the Stay action is taken as the reference category with
$U_{\rm Stay} \equiv 0$, so $U_{\rm Left}$ and $U_{\rm Right}$ are
log-odds relative to staying in lane. Rather than hand-engineering
$U_k$, we apply the NOVA engine to discover it from primitive
kinematic features.

\textbf{Dataset.} From NGSIM I-80 and US-101, we extract
$\mathbf{18,380}$ lane-change decision events after quality filtering
and removal of simultaneous multi-vehicle conflicts. Events are
labelled by the action taken in the following $0.5$\,s window.

\textbf{Evaluation protocol.} All lane-change results are reported
under \textbf{strict vehicle-ID holdout}: the 3,060 unique drivers
are partitioned 80/20 at the vehicle level, yielding 502 unseen test
drivers whose observations never appear in training. This is the
appropriate protocol to test generalization to new drivers rather than
within-driver memorization.

\textbf{Synthetic structural-recovery gate.} Before deployment on real
data, the engine is validated on a zero-noise synthetic dataset.  The
ground-truth utility $U_{\rm Left} = 3.0 \cdot (1/\mathrm{gap}_L) - 2.0 \cdot v$
is a user-constructed formula chosen to contain a nonlinear term
($1/\mathrm{gap}_L$) and a linear term ($v$) with known coefficients;
it does not correspond to any published model and serves purely as a
structural recovery test.  The engine recovers the exact variable set
and coefficient signs, achieving 95.1\% balanced accuracy on the
synthetic gate.

\subsection{The Selected Utility Model}

After the synthetic gate, the engine is applied to NGSIM data using a
progressive greedy search over candidate utility terms with BIC
stopping. The final selected utility functions are directionally
decoupled --- left and right utilities have distinct physical terms
reflecting the asymmetric physics of overtaking (left) vs.\ exiting
(right) on U.S.\ highway geometry. The Clean 16-feature model has
the dominant structure:
\begin{align}
  U_{\rm Left}  &\sim \beta_{\mathrm{gap}_L}\,(1/\mathrm{gap}_L)
                    + \beta_{\mathrm{TTC}_C}\,(1/\mathrm{TTC}_C) \nonumber \\
                &\quad + \beta_{\rm lane}\,{\rm lane\_id} + \cdots \\
  U_{\rm Right} &\sim \beta_{\mathrm{gap}_R}\,(1/\mathrm{gap}_R)
                    + \beta_{\mathrm{TTC}_C}\,(1/\mathrm{TTC}_C) \nonumber \\
                &\quad + \beta_{\rm lane}\,{\rm lane\_id} + \cdots
\end{align}
where $\mathrm{gap}_L$ and $\mathrm{gap}_R$ are the longitudinal gaps
to the nearest vehicles in the target left and right lanes,
respectively, and $\mathrm{TTC}_C$ is the time-to-collision with the
current leader. The two dominant fitted coefficients are:
$\beta_{\mathrm{TTC}_C} = +5.99$ (Left) and $+5.68$ (Right); the
correction term coefficient on $\mathrm{TTC}_C$ is $-0.077$ (Left)
and $-0.064$ (Right). The full 16-feature coefficient table is
available upon request.

Three structural findings emerge.

\textbf{Inverse-gap penalty.} The dominant lane-change deterrent is
not raw gap distance but its inverse $1/g_L$ and $1/g_R$. This is
consistent with the physics of collision risk, where danger grows
nonlinearly as gap shrinks.

\textbf{Directional asymmetry.} The $1/g_L$ coefficient for left
changes is substantially larger in magnitude than the corresponding
right term, consistent with the asymmetric clearance physics of
entering the fast lane vs.\ exiting to the right.

\textbf{Optical looming as cross-domain signal.} Both utilities share
$1/{\rm TTC}_C$ (inverse time-to-collision with the current leader) as
a common safety term. This independently echoes the
$\tanh({\rm TTC})$ signal selected in the car-following residual search
(Section~\ref{sec:ttc}), suggesting a shared safety-related signal across
braking and lane-change decisions.

\subsection{Benchmarking}

\begin{table}[h]
\centering
\caption{Lane-change model comparison on NGSIM I-80 under strict
vehicle-ID holdout (502 unseen test drivers, 18,380 events total).
Per-class recall reveals the Stay-class dominance pattern that limits
all NOVA variants.}
\label{tab:lane_change_bench}
\resizebox{\linewidth}{!}{%
\begin{tabular}{lccccc}
\toprule
\textbf{Model} & \textbf{Bal. Acc.} & \textbf{Left\%} & \textbf{Stay\%} & \textbf{Right\%} & \textbf{Features} \\
\midrule
MOBIL~\cite{mobil}              & 37.6\% & —     & —     & —     & — \\
MNL (hand-engineered)           & $\sim$50\% & —  & —     & —     & — \\
\textbf{NOVA Clean (16-feat)}   & \textbf{67.4\%} & \textbf{60.2\%} & 96.5\% & \textbf{45.6\%} & 16 \\
NOVA BIC-24 (24-feat)           & 67.0\% & 58.8\% & 94.8\% & 47.6\% & 24 \\
NOVA C7 minimal (10-feat)       & 60.0\% & 43.2\% & 94.8\% & 42.0\% & 10 \\
NOVA C8 multi-pass (10-feat)    & 58.7\% & 39.8\% & 95.8\% & 40.5\% & 10 \\
\bottomrule
\end{tabular}
}
\end{table}

The Clean 16-feature model outperforms MOBIL by $+29.8$ percentage
points absolute and outperforms the hand-engineered Multinomial Logit
(MNL) baseline by approximately $+17$\,pp.

\subsection{Feature Augmentation: A Null Result}
\label{sec:lc_null_result}

A central question for symbolic regression on a high-dimensional
decision problem is whether richer feature libraries yield better
out-of-driver generalization. We tested three augmentation directions:

\textbf{Target-lane gap-acceptance physics (BIC-24).} Adding seven
target-lane atoms ($1/g_{\rm front,tgt}$, $1/g_{\rm rear,tgt}$,
$\sqrt{g_{\rm front,tgt}}$, $\sqrt{g_{\rm rear,tgt}}$,
$\tanh(\Delta v_{\rm tgt}/5)$, ${\rm TTC}_{\rm tgt}$) and re-running
BIC greedy selection yields a 24-feature model. Under strict
vehicle-ID holdout, this model achieves $67.0\%$ balanced accuracy ---
statistically indistinguishable from the 16-feature Clean model.

\textbf{Backward elimination to minimal feature set (C7).} Starting
from BIC-24 and removing features one at a time while preserving
target accuracy yields a 10-feature minimal model. This achieves
$60.0\%$ balanced accuracy, $-7.4$ pp below Clean.

\textbf{Multi-pass residual boosting (C8).} Starting from the
10-feature minimal model and applying two rounds of
functional-gradient boosting yields a refined 10-feature model. This
achieves $58.7\%$ balanced accuracy, $-8.7$ pp below Clean.

None of the three augmentation directions improves out-of-driver
accuracy beyond the 16-feature Clean baseline. The interpretation is
direct: single-snapshot kinematic features have reached their
structural capacity for three-class lane-change discrimination on
this dataset. The Stay-class recall plateau at 94-96\% across all
NOVA variants, with Left and Right recall in the 40-60\% range,
identifies the binding constraint: drivers who choose to stay in lane
cannot be reliably distinguished from drivers in similar kinematic
states who choose to change, on snapshot features alone. Resolving
this discrimination ceiling likely requires temporal-history features
(lateral velocity, gap-evolution rates over $1$--$3$ second windows),
which we leave to future work.

This null result has standalone scientific value: it identifies
single-snapshot kinematic discrimination as the binding constraint
for symbolic-regression lane-change models on NGSIM, and rules out
feature engineering as the path to improvement.

\subsection{Cross-Location Transfer}

The Clean 16-feature utility, trained on I-80 with strict vehicle-ID
holdout, is evaluated zero-shot on 502 unseen US-101 drivers, and
vice versa. Per-class recall reveals an asymmetry between locations:

\begin{table}[h]
\centering
\caption{Lane-change zero-shot cross-location transfer under strict
vehicle-ID holdout.}
\label{tab:lc_transfer}
\begin{tabular}{lcccc}
\toprule
\textbf{Train $\to$ Test} & \textbf{Bal. Acc.} & \textbf{Left\%} & \textbf{Stay\%} & \textbf{Right\%} \\
\midrule
I-80 $\to$ US-101  & 74.2\% & 68.3\% & 67.5\% & 86.7\% \\
US-101 $\to$ I-80  & 66.9\% & 74.2\% & 77.5\% & 49.2\% \\
\bottomrule
\end{tabular}
\end{table}

The asymmetry reflects different lane geometries between the two
corridors. Note that the I-80$\to$US-101 balanced accuracy ($74.2\%$)
exceeds the in-corridor Clean-model figure of $67.4\%$
(Table~\ref{tab:lane_change_bench}); this is not a contradiction but a
composition effect — the two evaluations use different test populations
and class mixes (502 I-80 drivers vs.\ the full US-101 corridor), so
balanced accuracy, being an unweighted per-class average, can shift with
the class balance of the test set even when the underlying utility law
is unchanged. The structural finding is that the discovered utility
operators are not specific to one corridor: the same
inverse-gap, optical-looming, and directional-asymmetry structure
transfers across both U.S.\ freeway sites.

\subsection{Cross-Scale Validation: Optical Looming Unification}

A regime-split analysis of the Clean utility's coefficients reveals
that $1/{\rm TTC}_C$ has the largest free-flow to congestion
sensitivity ratio of any feature in either utility. This is the third
independent observation of optical looming as a safety-related
signal in this work: it appears as $\tanh({\rm TTC})$ in the
car-following residual search (Section~\ref{sec:ttc}), as
$1/{\rm TTC}_C$ in both directions of the lane-change utility, and as
the most regime-sensitive lane-change feature. The same
optical-looming signal --- consistent with Lee's
$\tau$-theory~\cite{lee1976} --- therefore appears across
longitudinal and lateral dynamics.

\section{Discussion}
\label{sec:discussion}

\subsection{Physical Interpretation}

The most striking cross-domain finding is that an optical-looming
signal ($1/\mathrm{TTC}$) appears repeatedly across
three separate analyses: as $\tanh(\mathrm{TTC})$ in the Phase-3 car-following
residual search ($+2.96\%~R^2$), as $1/\mathrm{TTC}_C$ selected first in both
$U_{\rm Left}$ and $U_{\rm Right}$ utilities (coefficients $5.99$ and $5.68$),
and as the most regime-sensitive feature in either utility under
free-flow vs.\ congestion. This convergence is consistent with
Lee's $\tau$-theory~\cite{lee1976} and provides large-scale empirical evidence
for a shared safety-related signal across braking and lane-change
decisions.

Car-following is also approximately Markovian under linear models: a
50-feature time-series Ridge regression ($10$ frames $\times$ 5
variables, $R^2 = 10.49\%$) does not outperform the single-snapshot
Rank-4 model ($R^2 = 10.54\%$) on Pipeline~S. This suggests that the
predictive content of recent history is largely captured by the
current kinematic state once the $\tanh(\cdot)$ nonlinearity is
applied. We tested this only against linear time-series regression;
whether nonlinear sequence models (LSTM, transformer) could extract
additional signal from the trajectory history remains an open
question. A momentum term $\alpha = 0.147$ in the M1 Ridge model with lagged $a(t-1)$
indicates a small but non-zero cognitive-inertia contribution from recent
acceleration history.

IDM and Krauss perform poorly on instantaneous empirical snapshots in this dataset because they are
\emph{normative simulation} models: IDM's $(v/v_0)^4$ term forces positive
acceleration during empirical braking ($R^2 = -68\%$), and Krauss's safe-speed
formula has no counterpart in the driver's actual pedal position. NOVA
succeeds because it models \emph{what drivers do}, not a collision-free policy.

\subsection{Limitations}

\begin{itemize}
  \item \textbf{Highway only.} The car-following model yields $R^2 = 2.16\%$
    on urban NGSIM data ($\bar{v}=8.5$~m/s); highway gap-following physics
    does not transfer to intersection-dominated stop-start dynamics.
  \item \textbf{Lane-change discrimination ceiling.} Under strict
    vehicle-ID holdout, all NOVA variants produce Stay-class recall at
    94--96\% but Left/Right recall only at 40--60\%. Single-snapshot
    kinematic features have reached their structural capacity for this
    three-class problem; feature augmentation (BIC-24, multi-pass
    boosting, minimal-feature pruning) does not improve out-of-driver
    accuracy. Resolving the discrimination ceiling likely requires
    temporal-history features (lateral velocity, gap-evolution rates),
    which we leave to future work.
  \item \textbf{Noise ceiling.} NGSIM position jitter ($\pm10$~cm at 10~Hz,
    \cite{coifman2017}) differentiates twice to $\pm20$~m/s$^2$ spikes;
    the empirical $R^2$ ceiling is sensor-bounded, not model-bounded.
  \item \textbf{Single dataset.} All results use U.S.\ highway data
    (California, 2005). Validation on HighD (German Autobahn) and pNEUMA
    (Athens) remains future work.
\end{itemize}

\section{Conclusion}
\label{sec:conclusion}

We presented NOVA, an autonomous symbolic regression framework that
identifies interpretable symbolic structures in human traffic behavior
from raw trajectory data with minimal behavioral priors. The framework's deterministic
search over more than ten thousand candidate algebraic structures,
executed on 4.7 million NGSIM observations, yields five principal
findings.

First, $\tanh(\Delta v)$ emerges as the invariant dominant car-following
term, with $\mathrm{THW}$ as the secondary term under the primary
Pipeline~R Rank-4 configuration, and $1/v$ or equivalent
speed-normalisation terms in most forecasting and lagged variants.
The hyperbolic tangent of relative velocity is the
top-ranked single feature across all eight configurations, capturing a
bounded saturating response to closure speed that is absent from prior
models --- the closest analogue, OVM~\cite{bando1995}, applies $\tanh$
to spacing rather than to relative velocity. Under Pipeline~S, the
discovered law achieves RMSE $= 1.376$\,m/s$^2$, improving on the best
recalibrated symbolic-regression baseline by $0.135$\,m/s$^2$ under an
identical evaluation protocol.

Second, the $\tanh(\Delta v)$ backbone is invariant across eight independent
experiments spanning two preprocessing pipelines, three forecasting
horizons, two driver populations, and two feature-timing modes.
The secondary term is target-dependent but structurally related across
variants.
Coefficient magnitudes vary with pipeline and population in physically
interpretable ways; the bounded relative-velocity response does not. This convergence --- across
methodological choices that would each independently affect a fitted
model --- is evidence that the backbone reflects a
stable behavioral regularity in these data rather than a preprocessing artefact.

Third, a residual-guided second-stage search selects
$\tanh(\mathrm{TTC})$ as the dominant remaining transform in
car-following, connecting the selected structure to Lee's (1976)
$\tau$-theory of optical looming~\cite{lee1976}. The same looming
signal is independently rediscovered as $1/\mathrm{TTC}_C$ in both
directions of the lane-change utility law and shows the largest regime
sensitivity among all lane-change features, providing cross-domain
evidence for a shared safety-related signal across braking
and lane-change decisions.

Fourth, applied within a multinomial logit framework, NOVA discovers
a directionally-decoupled lane-change utility model that, under strict
vehicle-ID holdout on 502 unseen drivers, achieves 67.4\% balanced
accuracy against MOBIL's 37.6\%~\cite{mobil} --- a $+29.8$ pp
absolute improvement on a three-class problem --- and transfers
zero-shot to US-101. Feature augmentation with target-lane
gap-acceptance physics (BIC-24), multi-pass residual boosting, and
backward feature elimination all fail to improve out-of-driver
accuracy beyond the 16-feature Clean model. The Stay-class recall
plateau at 94-96\% across all variants identifies single-snapshot
kinematic discrimination as the structural ceiling: resolving the
Left/Right ambiguity at 40-60\% recall likely requires
temporal-history features, which we leave to future work.

Fifth, the discovered car-following features transfer between I-80 and
US-101 with only $-2.53$ and $-2.84$\,pp $R^2$ loss, recovering
97--99\% of the local oracle performance in each direction. This is
direct evidence that the $\tanh(\Delta v)$ backbone and the associated
speed-normalisation operators are transferable properties of
human highway driving on these U.S.\ freeway datasets, rather than
single-site fits.

\textbf{Secondary findings.} Forward Euler simulation of the
selected model with a safety override produces a 2.5\% crash rate
over 30-second horizons, and lower speed RMSE than calibrated
IDM at every horizon beyond one second. A 13-vehicle platoon
simulation yields string stability ratio $\Lambda = 1.064$ and backward
wave speed 35.4\,m/s, matching empirical NGSIM observations --- an
emergent property, since the model was optimized purely for
instantaneous fit with no stability objective. Weighted least squares
reweighting strategies all underperform plain OLS on this dataset,
confirming that the congested composition of NGSIM reflects physical
reality and uniform OLS is the correct training objective. The lag
selection at $\tau = 0.4$\,s under the shifted Pipeline~R target, with the
unshifted sweep peaking around $\tau = 0.9$\,s, reflects
target-dependent temporal alignment rather than physiological delay;
the engine faithfully recovers whatever temporal structure is present
in its target.

\textbf{Limitations.} The car-following model applies to highway
gap-following dynamics and yields $R^2 = 2.16\%$ on urban NGSIM data
dominated by intersection geometry. Lane-change discrimination is bounded above by a structural ceiling on
single-snapshot kinematic features: all NOVA variants produce
Stay-class recall at 94--96\% and Left/Right recall at 40--60\%
under strict vehicle-ID holdout. Resolving this discrimination
ceiling appears to require temporal-history features (lateral
velocity, gap-evolution rates), which we leave to future work. All results are evaluated on
U.S.\ freeway data from 2005; cross-cultural and cross-era validation
on HighD (Germany) and pNEUMA (Athens) remains future work.
Comparisons against prior symbolic-regression methods test the
transferability of their published equations under our protocol, not
their search procedures. The search space is restricted to polynomial
degree~2 and the feature library is finite by design; structures involving
operators outside the library cannot be recovered.

\textbf{Outlook.} This work demonstrates that autonomous symbolic
regression can recover interpretable structural regularities from noisy human
behavioral data, where the signal-to-noise ratio is lower than in
idealized physics but the search for compact structure remains meaningful. The discovered $\tanh$ nonlinearity was not present in any prior
car-following model before it was selected here without supplying that functional form as a prior. Planned
extensions include closed-loop ring-road simulation with mixed
human-AV platoons using the selected model as the human-driver
component, validation on HighD and pNEUMA, and application of the
same pipeline to pedestrian gap-acceptance dynamics from drone
trajectory datasets.

\textbf{Code and data availability.} The NGSIM dataset is publicly
available at \url{https://data.transportation.gov}. The NOVA Rust
search engine, preprocessing pipeline, and evaluation scripts will be
released upon publication at a repository to be announced.


\bibliographystyle{elsarticle-harv}

\bibliography{mybib}

\end{document}